\documentclass{article}

\usepackage{microtype}
\usepackage{graphicx}
\usepackage{subcaption}

\usepackage{hyperref}

\usepackage[preprint]{icml2026}

\renewcommand{\toptitlebar}{%
  \vspace{-20pt}
  \par\noindent
  \begin{minipage}{\textwidth}
    \includegraphics[height=0.65cm]{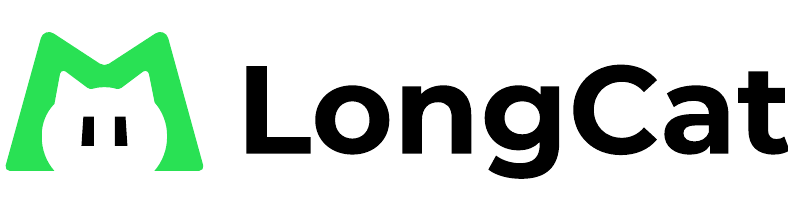}%
    \hfill
    \includegraphics[height=0.82cm]{figures/logos/nju.jpg}%
  \end{minipage}
  \par\vspace{5pt}
  \hrule height1pt
  \vskip .25in
}

\usepackage{caption}
\usepackage{amsmath}
\usepackage{amssymb}
\usepackage{mathtools}
\usepackage{amsthm}
\usepackage{multirow}
\usepackage{diagbox}
\usepackage{makecell}

\usepackage{times}
\usepackage{soul}
\usepackage{url}
\usepackage{booktabs}

\usepackage{times}
\usepackage{latexsym}

\usepackage[T1]{fontenc}

\usepackage{nicefrac}       
\usepackage{microtype}      
\usepackage{newunicodechar}
\usepackage{xcolor}         
\usepackage{colortbl}
\usepackage{tcolorbox}
\usepackage{enumitem}
\usepackage{fontawesome5}

\usepackage[capitalize,noabbrev]{cleveref}

\theoremstyle{plain}
\newtheorem{theorem}{Theorem}[section]

\newtheorem{lemma}[theorem]{Lemma}

\theoremstyle{definition}

\theoremstyle{remark}

\usepackage[textsize=tiny]{todonotes}

\icmltitlerunning{$V_{0.5}$: Generalist Value Model as a Prior for Sparse RL Rollouts}

\begin{document}

\icmltitle{$V_{0.5}$: Generalist Value Model as a Prior for Sparse RL Rollouts}

\begin{center}
\vspace{-10pt}

{
\textbf{Yi-Kai Zhang}\textsuperscript{1,2,3}, 
\textbf{Yueqing Sun}\textsuperscript{3},
\textbf{Hongyan Hao}\textsuperscript{3},
\textbf{Qi Gu}\textsuperscript{3,\raisebox{0.4pt}{\scriptsize\faIcon[regular]{envelope}}}, \\ 
\textbf{Xunliang Cai}\textsuperscript{3}, 
\textbf{De-Chuan Zhan}\textsuperscript{1,2},
\textbf{Han-Jia Ye}\textsuperscript{1,2,\raisebox{0.4pt}{\scriptsize\faIcon[regular]{envelope}}}
}
\vspace{5pt}

{
\textsuperscript{1}School of Artificial Intelligence, Nanjing University \\
\textsuperscript{2}National Key Laboratory for Novel Software Technology, Nanjing University \\
\textsuperscript{3}Meituan, China \\
}

\vspace{5pt}
\parbox{\linewidth}{\centering
    {Project Page:} {\small \url{https://now-join-us.github.io/V0_5}}
}
\vspace{3pt}

\end{center}

\vspace{15pt}

\begingroup
  \renewcommand{\thefootnote}{\raisebox{0.4pt}{\scriptsize\faIcon[regular]{envelope}}}
  
  \footnotetext{Correspondence to \texttt{guqi03@meituan.com} and \texttt{yehj@lamda.nju.edu.cn}.}
\endgroup

\renewcommand{\thefootnote}{\arabic{footnote}}

\begin{abstract}
In Reinforcement Learning with Verifiable Rewards (RLVR), constructing a robust advantage baseline is critical for policy gradients, effectively guiding the policy model to reinforce desired behaviors. Recent research has introduced Generalist Value Models (such as $V_0$), which achieve pre-trained value estimation by explicitly encoding model capabilities in-context, eliminating the need to synchronously update the value model alongside the policy model.
In this paper, we propose $V_{0.5}$, which adaptively fuses the baseline predicted by such value model (acting as a prior) with the empirical mean derived from sparse rollouts. This constructs a robust baseline that balances computational efficiency with extremely low variance.
Specifically, we introduce a real-time statistical testing and dynamic budget allocation. This balances the high variance caused by sparse sampling against the systematic bias (or hallucinations) inherent in the value model's prior. By constructing a hypothesis test to evaluate the prior's reliability in real-time, the system dynamically allocates additional rollout budget on demand. This mechanism minimizes the baseline estimator's Mean Squared Error (MSE), guaranteeing stable policy gradients, even under extreme sparsity with a group size of 4. Extensive evaluations across six mathematical reasoning benchmarks demonstrate that $V_{0.5}$ significantly outperforms GRPO and DAPO, achieving faster convergence and over some 10\% performance improvement.
\end{abstract}

\begin{figure}[H]
    \centering
    \includegraphics[width=\textwidth]{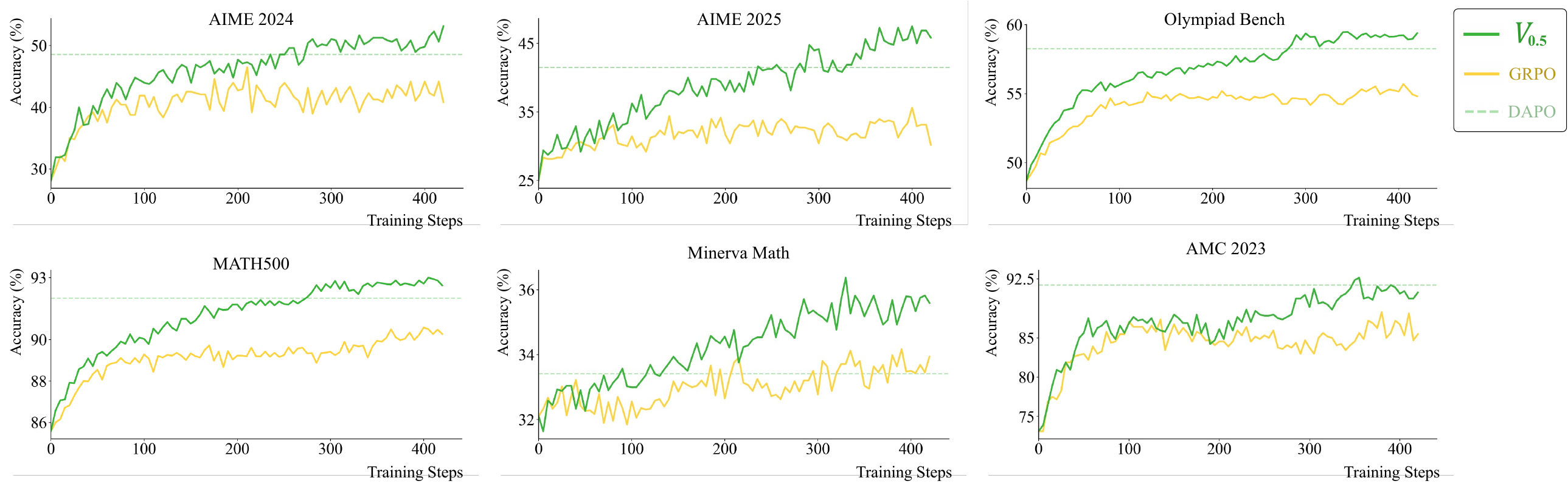}
    \caption{Performance of $\boldsymbol{V}_{\text{\textbf{0.5}}}$ across six diverse mathematical reasoning benchmarks, demonstrates superiority over GRPO~\cite{DeepSeek_Math} and DAPO~\cite{DAPO}, achieving faster convergence and some over 10\% performance improvement.}
    \label{fig:performance}
\end{figure}

\section{Introduction}

In the post-training phase of Large Language Models (LLMs), Reinforcement Learning with Verifiable Rewards (RLVR) has emerged as a standard paradigm for enhancing complex reasoning capabilities~\cite{longcat_2601,Qwen3,gemini2_5}. For policy gradient methods within this paradigm, constructing a robust advantage baseline is critical for stable training~\cite{williams1992simple,greensmith2004variance,high_dimensional_continuous_control}.
Currently, baseline estimation relies on two primary approaches: (\textbf{1}) Monte Carlo Sampling (\textit{e.g.}, GRPO~\cite{DeepSeek_R1}), which computes the empirical reward mean from online rollouts, and (\textbf{2}) Parameterized Value Models (\textit{e.g.}, PPO~\cite{ppo}), which predict the expected return via a separate model. Both face significant limitations. While empirical sampling guarantees an unbiased estimate, the high computational cost of long-horizon tasks forces the use of sparse rollouts. This sparsity exposes the empirical mean to severe statistical variance, which can compromise training stability. Conversely, while parameterized value models reduce this variance, they require expensive synchronous training and are prone to introducing systematic bias due to poor out-of-distribution (OOD) generalization.

Recent research highlights the potential of Generalist Value Models, such as $V_0$~\cite{v_0}. By explicitly encoding model capabilities in-context, $V_0$ is capable of estimating the expected performance of any model on unseen prompts without requiring parameter updates. Rather than synchronously training a coupled critic, it generates predictions via a completely separate, frozen model. Because this value estimation is generated independently of, and prior to, the actual online rollouts, it functions as a statistical prior for the advantage baseline. This prior may provide immediate guidance under sparse rollout conditions. However, like any generalist model, $V_0$ is susceptible to hallucinations or biased predictions when evaluating novel or overly complex out-of-distribution (OOD) prompts. Therefore, a critical challenge emerges: how can we safely integrate this static prior into the empirical baseline estimation of sparse online rollouts, ensuring that the policy model benefits from the prior's variance-reduction properties without being corrupted by its occasional hallucinations?

In this paper, we propose $\boldsymbol{V}_{0.5}$, an adaptive baseline estimation and budget allocation framework designed to systematically resolve this dilemma. $V_{0.5}$ operates through two tightly coupled mechanisms: (\textbf{1}) it safely fuses the value model's prior with empirical sparse rollouts to construct a more robust advantage estimation; (\textbf{2}), it adaptively scales the rollout budget based on the real-time statistical conflict between the prior predictions and the online observations. The framework is designed to fully exploit the generalist prior to suppress the high variance of sparse sampling when the predictions are reliable, while strictly bounding the impact of hallucinations by isolating the prior and dynamically allocating additional budget only when statistically necessary to correct the baseline. Specifically, $V_{0.5}$ achieves this through the following components:
\begin{itemize}[noitemsep,topsep=0pt,leftmargin=*]
\item \textbf{Empirical Shrinkage Fusion:} For a fixed number of sparse rollouts, $V_{0.5}$ constructs a baseline using a shrinkage estimator that fuses the empirical mean with the prior prediction. We demonstrate that the mean squared error (MSE) of this combined baseline orthogonally decomposes into observation variance and prior bias. To protect against severe prior bias, $V_{0.5}$ employs a positive-part truncation functionally equivalent to a statistical hypothesis test. It dynamically assesses the value model's reliability: if consistent with the rollouts, the prior is heavily leveraged to suppress variance; if a severe conflict is detected (indicating a hallucination), the system swiftly isolates the prior and reverts to the empirical mean, guaranteeing a safe, bounded error~\cite{james1961estimation}.
\item \textbf{Sequential OSLA Allocation:} Relying solely on rigid sparse rollouts can lead to false rejections of an accurate prior due to the sheer randomness of limited sampling. To address this, $V_{0.5}$ extends baseline estimation into a dynamic budget allocation problem. Grounded in One-Step-Look-Ahead (OSLA) sequential analysis, the framework quantifies baseline uncertainty in real-time to derive a continuous target boundary. This allows the system to automatically decide whether to halt sampling early to save budget, or to enforce additional rollouts to resolve conflicts and correct prior hallucinations. By balancing statistical precision with marginal costs, it achieves optimal on-demand scheduling~\cite{wald2004sequential}.
\end{itemize}
Our main contributions are summarized as follows:
\begin{itemize}[noitemsep,topsep=0pt,leftmargin=*]
\item We propose $V_{0.5}$ to safely integrate generalist value priors into sparse RL rollouts. By employing an empirical shrinkage estimator coupled with the Sequential OSLA Allocation, the framework neutralizes the high variance of limited rollouts while actively safeguarding against hallucinations of the value model.
\item We provide mathematical foundations for both mechanisms. We prove that the baseline mean squared error (MSE) orthogonally decomposes to linearly suppress the overall policy gradient variance, and we establish the asymptotic optimality of our dynamic stopping rule.
\item Extensive evaluations validate the practical impact of our approach. By enabling robust, on-demand budget scheduling, $V_{0.5}$ significantly outperforms standard GRPO, achieving over a 10\% performance improvement across six diverse mathematical reasoning benchmarks.
\end{itemize}
\section{Preliminaries}

Before detailing the $V_{0.5}$ framework, this section formalizes the policy gradient objective within Reinforcement Learning with Verifiable Rewards (RLVR). We contrast the two dominant baseline estimation paradigms, coupled value models and empirical group sampling, to highlight their respective structural limitations. Subsequently, we formalize the statistical bias-variance dilemma that requires the fusion of sparse online rollouts with a generalist prior.

\subsection{Policy Gradients and the Baseline}

During LLM post-training, a policy model $\pi_\theta$ generates a response $o$ for a prompt $x \in \mathcal{D}_{\text{prompt}}$, receiving a scalar reward $r$ from a verifier or reward function. For simplicity, we assume normalized binary rewards, $r \in \{-1, 1\}$. To reduce policy gradient variance, a baseline $\mu$ is subtracted from the reward to compute the advantage ($A = r - \mu$), ensuring the model reinforces behaviors that exceed its average capability. Theoretically, the optimal baseline is the true expected return for $x$:
\begin{equation}
    \mu_{\text{true}} = \mathbb{E}_{o \sim \pi_\theta}[r \mid x]
\end{equation}
Since computing $\mu_{\text{true}}$ over the vast generation space of LLMs is computationally infeasible, the baseline should be approximated. Two primary paradigms currently exist, but both face distinct limitations in complex reasoning tasks:
\begin{itemize}[noitemsep,topsep=0pt,leftmargin=*]
\item \textbf{Parameterized Value Models} (\textit{e.g.}, PPO~\cite{ppo}): Actor-Critic architectures maintain a parameterized value function $V_\phi$ to estimate expected returns. While effective at variance reduction, this creates a coupling dilemma: the value model must be incrementally and synchronously trained to track the non-stationary target of the evolving policy $\pi_\theta$, incurring substantial computational and memory overhead.
\item \textbf{Empirical Group Sampling} (\textit{e.g.}, GRPO~\cite{DeepSeek_Math}): Value-free methods eliminate the independent value model. Instead, for a given prompt $x$, the policy generates a group of $G$ independent candidate responses, $\{o_{1}, o_{2}, \dots, o_{G}\}$. The baseline is approximated via their empirical mean:
\begin{equation}
    \bar{v}_G = \frac{1}{G}\sum_{k=1}^G r_{k}
\end{equation}
Although $\bar{v}_G$ is an unbiased estimate of $\mu_{\text{true}}$, its observation variance is inversely proportional to the group size ($\sigma_{\text{noise}}^2 \propto 1/G$). Preventing high variance or reward collapse (uniform -1 or 1 rewards) requires extensive Monte Carlo sampling. For long-horizon tasks, relying on a large $G$ is prohibitive. Consequently, under a constrained sparse rollout regime, the amplified variance causes the baseline estimate to fluctuate, yielding gradients that can destabilize training.
\end{itemize}

\subsection{Generalist Value Models: Breaking the Coupling Dilemma}

To avoid the synchronous training overhead of traditional critics, recent research has introduced Generalist Value Models, such as $V_0$. These models reframe value estimation by shifting from implicit parameter fitting to explicit In-Context Learning (ICL).
Rather than embedding policy information into latent weights, a generalist model treats the capability of an arbitrary policy $\pi$ explicitly via a context set $\mathcal{C}_\pi = \{(x_i, r_i)\}_{i=1}^N$ of historical query-performance pairs. Given a target prompt $x$, the generalist model infers the expected return dynamically:
\begin{equation}
    V = V_0(x, \, \mathcal{C}_\pi)
\end{equation}
This allows the value model to read the current capabilities of any policy and provide a baseline estimate before any online rollouts are generated, effectively decoupling value estimation from policy evolution without requiring gradient updates. 

\subsection{The Bias-Variance Tradeoff in Sparse Rollouts}

While a generalist prior provides immediate guidance and resolves the coupling dilemma, utilizing it directly as the baseline introduces a distinct limitation. Operating as a static prior, $V$ possesses an observation variance of zero; however, constrained by the generalist model's inherent generalization limits, it is susceptible to prediction errors on out-of-distribution (OOD) prompts. This introduces a systematic prior bias, quantified as:
\begin{equation}
    \Delta^2 = (V - \mu_{\text{true}})^2
\end{equation}
If left unmitigated, these prior estimation errors systematically corrupt advantage calculation. Consequently, baseline estimation under sparse rollouts is trapped in a strict statistical tradeoff:
\begin{itemize}[noitemsep,topsep=0pt,leftmargin=*]
    \item {Empirical Mean $\bar{v}_G = \frac{1}{G}\sum_{k=1}^G r_{k}$:} Unbiased ($\text{Bias} = 0$), but subject to high variance under sparse rollouts.
    \item {Prior Prediction $V = V_0(x, \mathcal{C}_\pi)$:} Stable ($\text{Variance} = 0$), but susceptible to systematic errors.
\end{itemize}
Given that the ground truth $\mu_{\text{true}}$ is unknown, relying exclusively on either the empirical mean or the potentially biased prior is suboptimal. This establishes the fundamental necessity for the $V_{0.5}$ framework. It must intelligently fuse these two estimators, leveraging the prior to suppress rollout variance while dynamically allocating additional rollout budget when statistical conflicts indicate prior inaccuracy.

\subsection{Unified Advantage Formulations}
To clearly contrast the existing paradigms and set the stage for our proposed solution, we present the unified expressions for the advantage $A_i = A(x, o_i)$ computed for a specific rollout $o_i$\footnote{While GRPO and $V_{0.5}$ further normalize by dividing by the standard deviation of the group's rewards, we omit this for clarity.}.
\begin{itemize}[noitemsep,topsep=0pt,leftmargin=*]
\item {Coupled Value Model (PPO):} $A_i = r_i - V_\phi(x)$.
\item {Empirical Sampling (GRPO):} $A_i = r_i - \left( \frac{1}{G}\sum_{k=1}^G r_k \right)$.
\item {Our Proposed $V_{0.5}$:}
\begin{equation}
    A_i = r_i - \underbrace{\left( w \cdot \bar{v}_k + (1 - w) \cdot V_0(x, \mathcal{C}_\pi) \right)}_{\text{Fused Shrinkage Baseline } \mu^*}
\end{equation}
\end{itemize}
The $V_{0.5}$ estimates the baseline through a convex combination of the empirical mean and the generalist prior. In \autoref{sec:method}, we will rigorously detail the mathematical motivation for this specific formulation, prove how it minimizes the Mean Squared Error (MSE) of the baseline estimator, and explain how the adaptive weight $w$ is dynamically computed in real-time.
\section{Method}
\label{sec:method}

\begin{figure*}[t]
    \centering
    \includegraphics[width=0.88\textwidth]{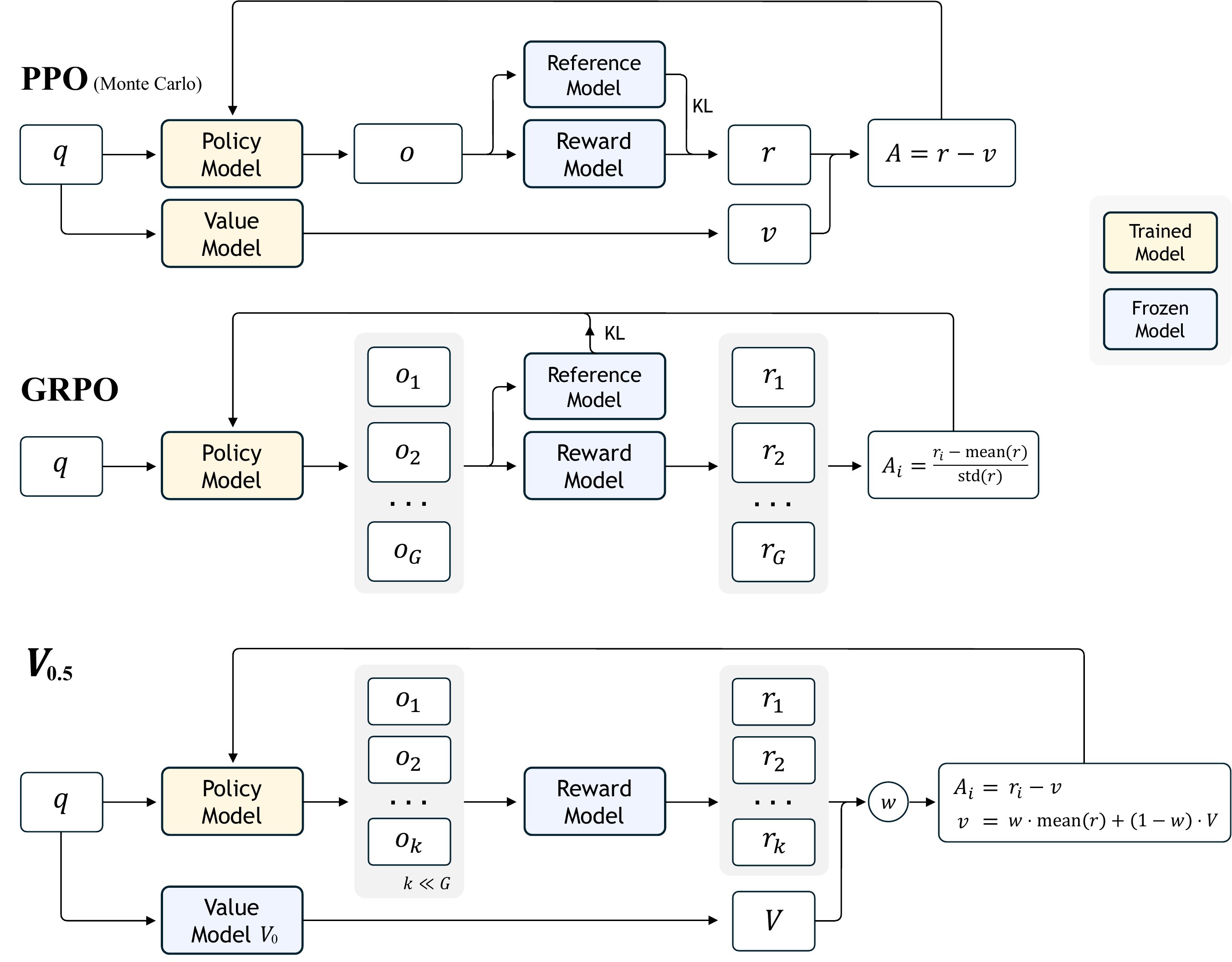}
    \caption{\textbf{Demonstration of PPO, GRPO, and our proposed $V_{0.5}$ framework.} While PPO requires a synchronously trained value model and GRPO relies on the empirical group mean, $V_{0.5}$ computes an adaptive baseline by fusing a prior from a frozen generalist value model ($V_0$) with sparse empirical rollouts via a dynamic weight $w$ (detailed in \autoref{thm:opt_weight}, \autoref{eq:sigma}, and \autoref{eq:delta}).}
    \label{fig:method}
\end{figure*}

The $V_{0.5}$ aims to resolve the limitations of empirical mean and prior prediction as baselines in sparse rollout scenarios through statistical inference and sequential decision-making mechanisms. This section first outlines the core execution logic of $V_{0.5}$, and then delves into the mathematical motivation and theoretical derivations supporting this logic.

\subsection{Core Execution Logic of $V_{0.5}$}
\label{subsec:core_logic}
The logic of $V_{0.5}$ transforms baseline estimation into a dynamic process of statistical testing and budget allocation. For a given input prompt, the system operates according to the following workflow:
\begin{enumerate}[noitemsep,topsep=0pt,leftmargin=*]
    \item Before generating rollouts, the generalist value model $V_0$ outputs a prior prediction $V$ as the expected return.
    \item The policy model generates a small initial set of $k_{\text{init}}$ rollouts, and calculates the empirical mean $\bar{v}_k$.
    \item \textbf{Deviation testing \& fusion}: The system calculates the guaranteed variance estimate $\hat{\sigma}_{\text{noise}}^2$ according to \autoref{eq:sigma} and the empirical bias $\hat{\Delta}_k^2$ according to \autoref{eq:delta}. Next, it computes the adaptive fusion weight $\hat{w}_k$ using \autoref{eq:weight}, and fuses the empirical mean with the prior prediction via \autoref{eq:fusion} to obtain the current baseline estimate.
    \item \textbf{Dynamic budget allocation}: Based on the current empirical bias $\hat{\Delta}_k^2$, the system evaluates the baseline in real-time and determines the subsequent action:
    \begin{itemize}
    \item \textbf{Stop}: If the fused baseline satisfies the criteria of the optimal stopping rule in \autoref{eq:stop}, the system halts the rollout process and outputs the advantage.
    \item \textbf{Rollout More}: If the test reveals a significant bias, the system forces the policy model to generate additional rollouts and loops back to \texttt{Step} \texttt{3} to recalculate the fused baseline. As this process iterates, the system may progressively shift its reliance toward the empirical mean.
    \end{itemize}
\end{enumerate}

Next, we detail the mathematical motivation and derivations that drive this logic.

\subsection{Motivation: Propagation Limits of Baseline MSE on Gradient Variance}
\label{subsec:motivation_baseline_mse}

Before detailing our fusion algorithm, we must clarify the theoretical optimization objective: \textbf{how does the baseline's MSE amplify policy gradient variance?} To this end, we establish the analytical relationship between the theoretical MSE of a baseline estimator $b$, denoted as $\text{MSE}(b) = \mathbb{E}[(b - \mu_{\text{true}})^2]$, and the overall variance of the policy gradient.

\begin{tcolorbox}[
colback=gray!3,
colframe=black,
boxrule=0.5pt,
arc=2pt,
left=6pt,
right=6pt,
top=4pt,
bottom=4pt
]
\begin{theorem} \label{thm:mse_bound}
For a single-step policy gradient estimator $\hat{g}(\theta)$ using baseline $b$, the trace of its covariance matrix is strictly bounded by:
$$\text{Tr}(\text{Var}(\hat{g}(\theta))) \leqslant \text{Var}_{\text{oracle}} + \Phi_{\text{score}} \cdot \text{MSE}(b) + L \cdot |\text{Bias}(b)|$$
where:
\begin{itemize}[noitemsep,topsep=0pt,leftmargin=*]
\item $\text{Var}_{\text{oracle}}$: The theoretical minimum variance achieved by a perfect baseline ($b = \mu_{\text{true}}$).
\item $\Phi_{\text{score}}$: The expected squared norm of the policy's score function ($\mathbb{E}_\pi[\|\nabla_\theta \log \pi_\theta(y|x)\|^2]$).
\item $L$: A constant scaling the cross-perturbation penalty introduced by the baseline's bias.
\end{itemize}
\end{theorem}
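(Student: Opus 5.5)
Write the single-sample estimator as $\hat g = (r-b)\,s$, where $s=\nabla_\theta\log\pi_\theta(y|x)$ is the score. The baseline $b$ is built from rollouts (and the prior), so it is a random variable, possibly correlated with the current sample. Decompose $r-b=(r-\mu_{\text{true}})-(b-\mu_{\text{true}})$ and set $e=b-\mu_{\text{true}}$. This gives $\hat g=\hat g_{\text{oracle}}-e\,s$ with $\hat g_{\text{oracle}}=(r-\mu_{\text{true}})s$. Using $\text{Tr}(\text{Var}(X))=\mathbb{E}\|X\|^2-\|\mathbb{E}X\|^2$, expand
\begin{equation*}
\mathbb{E}\|\hat g\|^2=\mathbb{E}\|\hat g_{\text{oracle}}\|^2-2\,\mathbb{E}\big[e\,(r-\mu_{\text{true}})\|s\|^2\big]+\mathbb{E}\big[e^2\|s\|^2\big].
\end{equation*}
The mean term needs separate care. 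When $b$ is independent of the sample, $\mathbb{E}[e\,s]=\mathbb{E}[e]\,\mathbb{E}[s]=0$ because the score has zero mean. In that case $\|\mathbb{E}\hat g\|^2=\|\mathbb{E}\hat g_{\text{oracle}}\|^2$, and the first and last terms give $\text{Var}_{\text{oracle}}$.

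For the quadratic term, I treat $b$ as independent of the specific response $y$ being scored (for instance, leave-one-out, or conditioning on the other rollouts). Then $\mathbb{E}[e^2\|s\|^2]=\mathbb{E}[e^2]\,\mathbb{E}\|s\|^2=\Phi_{\text{score}}\cdot\text{MSE}(b)$. If independence fails, which happens when $b$ includes $r_i$ itself, I would instead bound $\|s\|^2$ by its conditional expectation, or absorb the dependence into the constants. The cleanest route is to state the independence assumption explicitly.

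The cross term is where $L\,|\text{Bias}(b)|$ comes from. Under independence it factorizes as $-2\,\mathbb{E}[e]\cdot\mathbb{E}[(r-\mu_{\text{true}})\|s\|^2]=-2\,\text{Bias}(b)\cdot\mathbb{E}[(r-\mu_{\text{true}})\|s\|^2]$. Rewards are bounded, $|r-\mu_{\text{true}}|\leqslant 2$, so the absolute value is at most $2|\text{Bias}(b)|\,\big|\mathbb{E}[(r-\mu_{\text{true}})\|s\|^2]\big|\leqslant 4\Phi_{\text{score}}|\text{Bias}(b)|$. This lets me define $L=2\,|\mathbb{E}[(r-\mu_{\text{true}})\|s\|^2]|$, or the cruder $4\Phi_{\text{score}}$.

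Assembling the three pieces gives the claimed inequality. I expect the main obstacle to be handling the dependence between $b$ and the sample whose score is being weighted. If the fused baseline uses $r_i$, then $\mathbb{E}[e\,s]\neq 0$ and the mean term no longer cancels. My first attempt would be to show that this extra contribution is also $O(|\text{Bias}|)$ plus $O(1/k)$ corrections absorbed into $L$. Failing that, I would adopt the leave-one-out/independence assumption as a standing convention, which is standard in baseline-variance analyses.
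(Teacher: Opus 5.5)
Your proposal is correct and follows essentially the same route as the paper. Both split $r-b=(r-\mu_{\text{true}})-(b-\mu_{\text{true}})$, expand the squared norm into oracle, MSE-propagation and cross terms, and factor them using independence of the baseline from the scored sample. Your constant $L=2\bigl|\mathbb{E}[(r-\mu_{\text{true}})\|s\|^2]\bigr|$ is exactly the paper's.

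You are also more careful than the paper on two points. You subtract $\|\mathbb{E}\hat g\|^2$, so $\text{Var}_{\text{oracle}}$ is a genuine variance rather than a second moment. You also state the independence assumption explicitly, whereas the paper uses it implicitly even though the fused baseline $\hat{\mu}^*$ contains the scored reward $r_i$.
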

\end{tcolorbox}

\textbf{Analysis and Insights}: For LLMs with billions of parameters, the gradient sensitivity scale $\Phi_{\text{score}}$ is inherently massive. \autoref{thm:mse_bound} directly reveals that under sparse rollouts, any estimation error ($\text{MSE}(b)$) is severely amplified by $\Phi_{\text{score}}$, causing the gradient variance to explode. This dictates the core design philosophy of $V_{0.5}$: rather than chasing an unbiased but highly noisy empirical baseline, we must safely integrate a prior to intentionally \textbf{tolerate a minor, mathematically bounded bias $|\text{Bias}(b)|$ in exchange for a massive reduction in $\text{MSE}(b)$}.

\subsection{Empirical Shrinkage Fusion}

To minimize the theoretical $\text{MSE}$ under a fixed number of $k$ sparse samples, we construct a Shrinkage Estimator that fuses the empirical mean $\bar{v}_k$ and the prior prediction $V$ via a convex combination: $\mu^* = w \bar{v}_k + (1 - w) V$. By substituting this into the MSE, we establish the following theorem.
\begin{tcolorbox}[
colback=gray!3,
colframe=black,
boxrule=0.5pt,
arc=2pt,
left=6pt,
right=6pt,
top=4pt,
bottom=4pt
]
\begin{theorem} \label{thm:mse_decomp}
Since the prior $V$ is deterministic before rollout and $\bar{v}_k$ is an unbiased estimator, the MSE of this shrinkage estimator can be decomposed into a weighted sum of empirical variance and prior bias:
$$\text{MSE}(w) = \mathbb{E}[(\mu^* - \mu_{\text{true}})^2] = w^2 \sigma_{\text{noise}}^2 + (1 - w)^2 \Delta^2$$
\end{theorem}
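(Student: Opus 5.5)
The plan is a direct bias--variance decomposition, treating the weight $w$ and the prior $V$ as fixed, non-random quantities for the purpose of the statement. Here $\sigma_{\text{noise}}^2$ denotes $\text{Var}(\bar{v}_k)$, which equals $\sigma^2/k$ for i.i.d.\ rollout rewards with per-rollout variance $\sigma^2$. The only inputs are the two hypotheses stated in \autoref{thm:mse_decomp}: $V$ is deterministic before rollouts, and $\mathbb{E}[\bar{v}_k]=\mu_{\text{true}}$. These hold because the rewards $r_1,\dots,r_k$ are drawn i.i.d.\ from $\pi_\theta(\cdot\mid x)$, and $V=V_0(x,\mathcal{C}_\pi)$ is computed from $x$ and the context alone.

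\textbf{Step 1: rewrite the error.} Since $w+(1-w)=1$, we can subtract $\mu_{\text{true}}$ inside the convex combination:
\[
\mu^*-\mu_{\text{true}} = w(\bar{v}_k-\mu_{\text{true}}) + (1-w)(V-\mu_{\text{true}}).
\]
The first term is a zero-mean random fluctuation. The second term is a constant offset whose square is $(1-w)^2\Delta^2$.

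\textbf{Step 2: expand and take expectations.} Squaring gives three pieces:
\[
w^2(\bar{v}_k-\mu_{\text{true}})^2 + (1-w)^2\Delta^2 + 2w(1-w)(V-\mu_{\text{true}})(\bar{v}_k-\mu_{\text{true}}).
\]
The expectation of the first piece is $w^2\,\text{Var}(\bar{v}_k)=w^2\sigma_{\text{noise}}^2$, using unbiasedness. The second piece is deterministic. In the cross term, the factor $(V-\mu_{\text{true}})$ is a constant and can be pulled out, leaving $\mathbb{E}[\bar{v}_k-\mu_{\text{true}}]=0$. So the cross term vanishes; this is the ``orthogonal'' decomposition advertised in the introduction. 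Summing the surviving pieces gives $\text{MSE}(w)=w^2\sigma_{\text{noise}}^2+(1-w)^2\Delta^2$.

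\textbf{Main obstacle: the cross term.} Its vanishing depends on $V$ and $w$ being independent of the rollouts in the current estimate. This holds for a fixed $w$, which is the setting of the theorem. It fails for the data-dependent $\hat{w}_k$ built from $\hat{\Delta}_k^2$ and $\hat{\sigma}_{\text{noise}}^2$, since $\hat{w}_k$ depends on the same rollouts as $\bar{v}_k$. I will therefore state explicitly that the identity is for deterministic $w$, i.e.\ the oracle objective that is later minimized over $w$. I will also note that $V_0$ is frozen and $\mathcal{C}_\pi$ consists of historical data, which justifies conditioning on $V$. No earlier result, including \autoref{thm:mse_bound}, is needed beyond these observations.
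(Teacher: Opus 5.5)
Your proposal is correct and matches the paper's proof step for step: you split $\mu_{\text{true}}$ using $w+(1-w)=1$, expand the square, and eliminate the cross term $2w(1-w)(V-\mu_{\text{true}})\mathbb{E}[\bar{v}_k-\mu_{\text{true}}]$ using unbiasedness and the determinism of $V$. Your explicit remark that the identity requires a fixed, data-independent $w$, and fails for $\hat{w}_k$, is a useful clarification that the paper leaves implicit here and only takes up later in \autoref{thm:bias_bound}.
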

\end{tcolorbox}
With the error decomposed, we can pinpoint the exact weight that minimizes the overall MSE, as shown in \autoref{thm:opt_weight}.
\begin{tcolorbox}[
colback=gray!3,
colframe=black,
boxrule=0.5pt,
arc=2pt,
left=6pt,
right=6pt,
top=4pt,
bottom=4pt
]
\begin{theorem} \label{thm:opt_weight}
Setting the derivative of $\text{MSE}(w)$ with respect to $w$ to zero yields the unique global optimal weight that minimizes the estimation error:
$$w^* = \frac{\Delta^2}{\Delta^2 + \sigma_{\text{noise}}^2}$$
\end{theorem}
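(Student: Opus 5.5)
The plan is to take the decomposition of \autoref{thm:mse_decomp} as given, treat $\text{MSE}(w) = w^2 \sigma_{\text{noise}}^2 + (1-w)^2 \Delta^2$ as a quadratic in the scalar $w$, and minimize it directly. Here $\sigma_{\text{noise}}^2 = \text{Var}(\bar v_k)$ and $\Delta^2$ are fixed, non-negative quantities that do not depend on $w$.

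The first step is to differentiate:
\[
\frac{d\,\text{MSE}}{dw} = 2w\sigma_{\text{noise}}^2 - 2(1-w)\Delta^2 .
\]
Setting this to zero gives the linear equation $w(\sigma_{\text{noise}}^2 + \Delta^2) = \Delta^2$. Its solution is $w^* = \Delta^2/(\Delta^2+\sigma_{\text{noise}}^2)$.

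The second step shows that this stationary point is the unique global minimizer. The second derivative is $2(\sigma_{\text{noise}}^2+\Delta^2)$, which is constant and strictly positive whenever $\sigma_{\text{noise}}^2+\Delta^2>0$. So $\text{MSE}(w)$ is a strictly convex parabola on all of $\mathbb{R}$. A strictly convex parabola has exactly one critical point, and that point is the global minimum. Since $w^*\in[0,1]$ automatically, the minimizer also lies in the convex-combination range, so no boundary analysis is needed.

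As a check, I will substitute $w^*$ back in to obtain the minimal value $\text{MSE}(w^*) = \sigma_{\text{noise}}^2\Delta^2/(\sigma_{\text{noise}}^2+\Delta^2)$. This value is at most $\min(\sigma_{\text{noise}}^2,\Delta^2)$, so the fused baseline is never worse than either the pure empirical mean ($w=1$) or the pure prior ($w=0$).

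The only real obstacle is the degenerate case $\sigma_{\text{noise}}^2=\Delta^2=0$. There the MSE vanishes identically, every $w$ is optimal, and the formula is $0/0$. I would handle it by stating the theorem under the assumption $\sigma_{\text{noise}}^2+\Delta^2>0$. This holds whenever the rewards are non-degenerate, for example with Bernoulli rewards when $\mu_{\text{true}}\neq\pm1$. Alternatively, one can adopt any convention for $w^*$ in that case, since all choices give zero error. With that caveat stated, the proof is a routine calculus argument.
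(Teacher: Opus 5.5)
Your proposal is correct and matches the paper's proof essentially step for step. Both differentiate the decomposed quadratic $w^2\sigma_{\text{noise}}^2+(1-w)^2\Delta^2$, solve the resulting linear equation, and use the constant positive second derivative $2(\sigma_{\text{noise}}^2+\Delta^2)$ to conclude global minimality. Your explicit nondegeneracy assumption $\sigma_{\text{noise}}^2+\Delta^2>0$ is a small improvement, since the paper asserts strict positivity of the second derivative without stating this condition.
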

\end{tcolorbox}

\subsubsection{Empirical Weight Estimation}
In practice, the theoretical prior bias $\Delta^2$, the true variance $\sigma_{\text{noise}}^2$, and consequently the optimal weight $w^*$ from \autoref{thm:opt_weight} are inaccessible. We must approximate them using real-time observations to construct an empirical estimator. We denote these with a hat (\textit{e.g.}, $\hat{\sigma}_{\text{noise}}^2$, $\hat{\Delta}_k^2$, $\hat{w}_k$).

\begin{enumerate}[noitemsep,topsep=0pt,leftmargin=*]
\item Since rewards are bounded in $\left\{ -1, 1 \right\}$, the variance of a single rollout is at most $1$. Thus, the variance of the empirical mean $\bar{v}_k$ over $k$ independent rollouts is bounded by $1/k$, which we use as our guaranteed estimate:
\begin{equation} \label{eq:sigma}
    \hat{\sigma}_{\text{noise}}^2 = \frac{1}{k}
\end{equation}
\item The empirical estimate of the prior bias, $\hat{\Delta}_k^2$, is derived from the observed squared distance $(\bar{v}_k - V)^2$:
\begin{equation} \label{eq:delta}
\hat{\Delta}_k^2 = \max\left(0, (\bar{v}_k - V)^2 - \frac{1}{k}\right)
\end{equation}
\textbf{Statistically, this $\max$ operator functions as a simplified hypothesis test against random noise}: it strictly attributes the discrepancy to random sampling variations unless the observation distance significantly breaches the maximum noise boundary $1/k$. For the detailed statistical derivation establishing this equivalence, please refer to \autoref{app:truncation_2_hypothesis_testing}.
\end{enumerate}
Finally, by directly substituting these empirical estimates into the optimal formulation from \autoref{thm:opt_weight}, we derive the practical adaptive weight $\hat{w}_k$:
\begin{equation} \label{eq:weight}
\hat{w}_k = \frac{\hat{\Delta}_k^2}{\hat{\Delta}_k^2 + \hat{\sigma}_{\text{noise}}^2}
\end{equation}
This naturally yields our final fused empirical baseline estimator:
\begin{equation} \label{eq:fusion}
\hat{\mu}^* = \hat{w}_k \bar{v}_k + (1 - \hat{w}_k)V
\end{equation}
Let $\rho_i(\theta) = \frac{\pi_\theta(o_i \mid x)}{\pi_{\theta_{\text{old}}}(o_i \mid x)}$ denote the importance sampling ratio. Utilizing the advantage $A_i = \frac{r_i - \hat{\mu}^*}{\hat{\sigma}^*}$ with $\hat{\sigma}^* = \sqrt{1 - \hat{\mu}^{*2}}$, our surrogate optimization objective for a group size $k$ is formulated as:
\begin{equation} \label{eq:objective}
    \mathcal{J}_{V_{0.5}}(\theta) = \mathbb{E}_{x \sim \mathcal{D}_{\text{prompt}}, \{o_i\}_{i=1}^{k} \sim \pi_{\theta_{\text{old}}}} \left[ \frac{1}{k} \sum_{i=1}^{k} \min \left( \rho_i(\theta) A_i,\, \operatorname{clip}\left(\rho_i(\theta), 1-\epsilon, 1+\epsilon\right) A_i \right) \right]
\end{equation}

\subsubsection{Bias Bounds of the Estimator}

Unlike $\mu^*$ which uses the optimal weight $w^*$, our empirical weight $\hat{w}_k$ is dependent on the random variable $\bar{v}_k$. This statistical correlation renders the fused estimator $\hat{\mu}^*$ biased. However, \autoref{thm:bias_bound} guarantees that this induced bias is strictly confined within safe analytical limits.
\begin{tcolorbox}[
    colback=gray!3,
    colframe=black,
    boxrule=0.5pt,
    arc=2pt,
    left=6pt,
    right=6pt,
    top=4pt,
    bottom=4pt
]
\begin{theorem} \label{thm:bias_bound}
The empirical baseline estimator $\hat{\mu}^*$ possesses the following safety properties:
\begin{enumerate}
    \item $|\text{Bias}(\hat{\mu}^*)| \leqslant \frac{1}{\sqrt{k}}$.
    \item If $\Delta \neq 0$, the induced bias decays at $\mathcal{O}(\frac{1}{k})$ as the $k$ increases.
\end{enumerate}
\end{theorem}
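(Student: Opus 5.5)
The plan is to write the estimator's error relative to the truth in terms of the centered noise. Set $e = \bar{v}_k - \mu_{\text{true}}$, so $\mathbb{E}[e]=0$ and $\mathbb{E}[e^2]\leqslant 1/k$, and write $\delta = V-\mu_{\text{true}}$, so $\Delta = |\delta|$. From \autoref{eq:fusion},
$$\hat{\mu}^*-\mu_{\text{true}} = \hat{w}_k e + (1-\hat{w}_k)\delta = \delta + \hat{w}_k(e-\delta).$$
Hence
$$\text{Bias}(\hat{\mu}^*) = \mathbb{E}[\hat{w}_k e] + \delta\,\mathbb{E}[1-\hat{w}_k].$$
Both claims will come from controlling this expression. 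The useful structural fact is that $\hat{w}_k$ is a function only of $D^2 := (\bar{v}_k - V)^2 = (e-\delta)^2$. Writing $D=\bar v_k - V$, we have $\hat{w}_k = (D^2-1/k)_+/D^2$ whenever $D^2>1/k$, and $\hat w_k=0$ otherwise.

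\textbf{Part 1 (uniform bound).} Here I would use the compact form $\hat{\mu}^*-\mu_{\text{true}} = \delta + \hat{w}_k(e-\delta)$. Since $e-\delta = -D$, we get $\hat{\mu}^* = V + \hat w_k D$. On the event $D^2\leqslant 1/k$ the weight vanishes, so $\hat{\mu}^* = V$. Otherwise $\hat{\mu}^* = \bar{v}_k - (1-\hat{w}_k)D$ with $(1-\hat{w}_k)D = \frac{1/k}{D^2}D = \frac{1}{kD}$, so $|\hat{\mu}^*-\bar{v}_k| \leqslant \frac{1}{k|D|} < \frac{1}{\sqrt{k}}$. Thus $\hat{\mu}^*$ is a soft-thresholding-like map of $\bar v_k$ that never moves $\bar{v}_k$ by more than $1/\sqrt{k}$ on the rejection event. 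On the acceptance event it replaces $\bar{v}_k$ by $V$, which lies within $1/\sqrt{k}$ of $\bar{v}_k$ by definition of that event. In both cases $|\hat{\mu}^*-\bar{v}_k|\leqslant 1/\sqrt{k}$ pointwise. Taking expectations and using unbiasedness of $\bar{v}_k$ gives $|\text{Bias}(\hat{\mu}^*)| = |\mathbb{E}[\hat{\mu}^*-\bar{v}_k]| \leqslant 1/\sqrt{k}$. This step is clean and I expect no difficulty.

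\textbf{Part 2 ($\mathcal{O}(1/k)$ decay when $\Delta\neq0$).} I reuse the identity $\hat{\mu}^*-\bar{v}_k = -\frac{1}{kD}$ on $\{D^2>1/k\}$, and $\hat{\mu}^*-\bar v_k = V-\bar v_k=-D$ on the complement. Then
$$\text{Bias} = -\mathbb{E}\!\left[\tfrac{1}{kD}\mathbf{1}\{D^2>1/k\}\right] - \mathbb{E}\!\left[D\,\mathbf{1}\{D^2\leqslant 1/k\}\right].$$
For fixed $\delta\neq0$, $D = e-\delta$ concentrates at $-\delta$ with $|e|$ of order $1/\sqrt{k}$. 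I would split on the event $\{|e|\leqslant|\delta|/2\}$. On that event $|D|\geqslant|\delta|/2$, so for $k>4/\delta^2$ we have $D^2>1/k$, and the first term contributes at most $\frac{2}{k|\delta|}=\mathcal{O}(1/k)$. On the complement, both integrands are bounded by $\sqrt{k}\cdot\frac1k$ or $1/\sqrt{k}$, since each is at most $1/\sqrt{k}$ by Part 1. The complement's probability is controlled by Hoeffding for bounded rewards in $\{-1,1\}$: $\Pr(|e|>|\delta|/2)\leqslant 2e^{-k\delta^2/8}$. That piece is therefore exponentially small, and in particular $\mathcal{O}(1/k)$. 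Combining the two pieces yields $|\text{Bias}|\leqslant \frac{2}{k\Delta} + \frac{2}{\sqrt{k}}e^{-k\Delta^2/8} = \mathcal{O}(1/k)$.

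The main obstacle is this tail split. Chebyshev with $\mathbb{E}[e^2]\leqslant1/k$ only gives $\Pr(|e|>\Delta/2)\leqslant 4/(k\Delta^2)$, which multiplied by $1/\sqrt{k}$ is $\mathcal{O}(k^{-3/2})$ and already suffices. So I would state the argument with Chebyshev for robustness and mention Hoeffding as a sharper alternative. The constant necessarily depends on $1/\Delta$, which is consistent with the hypothesis $\Delta\neq 0$.
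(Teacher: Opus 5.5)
Your proposal is correct and follows the paper's proof essentially step for step: your pointwise bound $|\hat{\mu}^*-\bar{v}_k|\leqslant 1/\sqrt{k}$ is exactly the paper's two-case analysis showing $|Z|\leqslant 1/\sqrt{k}$ for $Z=(1-\hat{w}_k)(\bar{v}_k-V)$, and your Part 2 uses the same split on $\{|\bar{v}_k-\mu_{\text{true}}|\leqslant \Delta/2\}$, with the $2/(k\Delta)$ bound on the typical event and an exponential (Hoeffding) tail bound on the rare event. Your added remark that Chebyshev already suffices for the rare event (its contribution is $\mathcal{O}(k^{-3/2})$) is a valid minor refinement: it shows that only the second-moment bound $\mathbb{E}[e^2]\leqslant 1/k$ is needed, not bounded rewards.
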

\end{tcolorbox}
Recalling \autoref{thm:mse_bound}, we observe that by allowing a bounded bias of at most $\mathcal{O}(1/\sqrt{k})$, the $V_{0.5}$ estimator effectively offsets the catastrophic $\mathcal{O}(1/k)$ amplification of the MSE, ensuring stable training gradients even under extreme sparsity.

\subsection{Sequential OSLA Allocation and Optimal Stopping}
While static fusion optimally balances bias and variance under fixed compute, relying purely on a rigid sample size under extreme sparsity can result in high observational noise, potentially leading to false rejections of an accurate prior. To resolve this, $V_{0.5}$ integrates a dynamic sequential One-Step-Look-Ahead (OSLA) allocation mechanism.
We define the total risk after $k$ sampling steps as a combination of the empirical estimation error and the compute cost:
\begin{equation}
R(k) = \widehat{\text{MSE}}(k) + c \cdot k \quad \text{where} \quad \widehat{\text{MSE}}(k) = \frac{\hat{\Delta}_k^2}{k \hat{\Delta}_k^2 + 1}
\end{equation}
Here, $c$ denotes the compute cost factor per rollout. Crucially, $\widehat{\text{MSE}}(k)$ represents the \textit{empirical} MSE, calculated by substituting the unknown $\Delta^2$ with our real-time empirical estimate $\hat{\Delta}_k^2$. The decision to allocate additional budget hinges on whether the expected reduction in this empirical error exceeds the marginal cost.

\begin{tcolorbox}[
    colback=gray!3,
    colframe=black,
    boxrule=0.5pt,
    arc=2pt,
    left=6pt,
    right=6pt,
    top=4pt,
    bottom=4pt
]
\begin{lemma} \label{lem:marginal_return}
To derive a safe, continuous analytical boundary for deployment, we scale the discrete marginal return to establish its lower bound envelope:
\begin{equation*}
\widehat{\text{MSE}}(k) - \widehat{\text{MSE}}(k+1) > \frac{\hat{\Delta}_k^4}{((k+1) \hat{\Delta}_k^2 + 1)^2}
\end{equation*}
\end{lemma}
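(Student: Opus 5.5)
The plan is to read the marginal return in the One-Step-Look-Ahead sense. We freeze the current bias estimate $\hat{\Delta}_k^2$, writing $D := \hat{\Delta}_k^2 \geqslant 0$, and evaluate the look-ahead term $\widehat{\text{MSE}}(k+1)$ with the same plug-in value $D$. The justification is that at step $k$ the estimate $\hat{\Delta}_{k+1}^2$ is not yet observed, so OSLA substitutes the current estimate. Under this convention both sides of the inequality are explicit rational functions of $D$ and $k$, and the lemma reduces to an elementary inequality.

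The main argument is an exact difference computation. Put the two fractions $D/(kD+1)$ and $D/((k+1)D+1)$ over a common denominator. The numerator is
\[
D\bigl((k+1)D+1\bigr) - D\bigl(kD+1\bigr) = D^2,
\]
so that
\[
\widehat{\text{MSE}}(k) - \widehat{\text{MSE}}(k+1) = \frac{D^2}{(kD+1)\bigl((k+1)D+1\bigr)} .
\]
When $D>0$ we have $0 < kD+1 < (k+1)D+1$. Replacing the factor $kD+1$ by the larger quantity $(k+1)D+1$ therefore strictly decreases the fraction, which gives the claimed lower bound $D^2/((k+1)D+1)^2 = \hat{\Delta}_k^4/((k+1)\hat{\Delta}_k^2+1)^2$.

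To explain the phrase "continuous analytical boundary", I would also present a second route. Extend $f(t) = D/(tD+1)$ to real $t \geqslant 0$, where $f'(t) = -D^2/(tD+1)^2$. By the mean value theorem, $f(k)-f(k+1) = D^2/(\xi D+1)^2$ for some $\xi \in (k,k+1)$. Since $|f'|$ is strictly decreasing in $t$ when $D>0$, this is strictly larger than its value at the endpoint $t=k+1$. This shows that the bound is exactly the derivative envelope evaluated at $k+1$, which is the form used to derive the stopping boundary.

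The only delicate point is not algebraic. It is the degenerate case together with the modelling convention. If $\hat{\Delta}_k^2 = 0$, meaning the truncation in \autoref{eq:delta} has accepted the prior, then both sides vanish. In that case the inequality holds only with equality, so I would state the lemma for $\hat{\Delta}_k^2>0$ and note that it holds as $\geqslant$ in general. I would also state explicitly that $\widehat{\text{MSE}}(k+1)$ is understood with $\hat{\Delta}_k^2$ plugged in. Without this frozen-estimate convention, the random $\hat{\Delta}_{k+1}^2$ could make the difference have either sign, and the lemma would not hold pathwise.
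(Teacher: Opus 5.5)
Your proposal is correct and matches the paper's proof: both freeze the estimate under OSLA ($\hat{\Delta}_{k+1}^2 \approx \hat{\Delta}_k^2$), compute the exact difference $\hat{\Delta}_k^4/((k\hat{\Delta}_k^2+1)((k+1)\hat{\Delta}_k^2+1))$, and replace the smaller denominator factor by the larger one. Your caveat about $\hat{\Delta}_k^2 = 0$ is also right and more careful than the paper. The paper claims $k\hat{\Delta}_k^2+1 < (k+1)\hat{\Delta}_k^2+1$ for all $\hat{\Delta}_k^2 \geqslant 0$, but this fails at zero, where both sides of the lemma vanish and only equality holds.
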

\end{tcolorbox}

By equating this lower bound of the expected statistical return from \autoref{lem:marginal_return} to the marginal cost, we guarantee that executing an additional rollout yields a positive benefit, which derives the threshold for our dynamic budget allocation.
\begin{tcolorbox}[
    colback=gray!3,
    colframe=black,
    boxrule=0.5pt,
    arc=2pt,
    left=6pt,
    right=6pt,
    top=4pt,
    bottom=4pt
]
\begin{theorem} \label{thm:optimal_stop}
Given the compute cost $c$ and the updated empirical bias $\hat{\Delta}_k^2$, and assuming a minimum initial sample size $k_{\text{min}}$, the optimal stopping step $K^*$ is defined as follows:
\begin{equation} \label{eq:stop}
K^* = \inf \left\{ k \geqslant k_{\text{min}} : k \geqslant \frac{1}{\sqrt{c}} - \frac{1}{\hat{\Delta}_k^2} \right\}
\end{equation}
\end{theorem}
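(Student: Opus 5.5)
The plan is a One-Step-Look-Ahead (OSLA) argument. We stop at the first $k$ where one extra rollout no longer pays for itself, i.e.\ where the expected reduction in $\widehat{\text{MSE}}$ from step $k$ to $k+1$ drops below the marginal cost $c$. Write $R(k+1)-R(k) = -\big(\widehat{\text{MSE}}(k)-\widehat{\text{MSE}}(k+1)\big) + c$. The OSLA rule continues while this quantity is negative and stops at the first $k\geqslant k_{\text{min}}$ where it is nonnegative. The only technical task is to turn this inequality into the closed-form threshold in \autoref{eq:stop}.

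First, compute the marginal return exactly. Treat $\hat{\Delta}_k^2$ as a plug-in constant $D$ for the look-ahead, since OSLA freezes the current estimate. Set $f(k)=D/(kD+1)=1/(k+1/D)$. Then
\[
f(k)-f(k+1)=\frac{D^2}{(kD+1)((k+1)D+1)} .
\]
Since $kD+1<(k+1)D+1$, this exceeds $D^2/((k+1)D+1)^2$, which is \autoref{lem:marginal_return}. It is also at most $D^2/(kD+1)^2$. So the marginal return is sandwiched between $g(k+1)$ and $g(k)$, where $g(t)=\big(D/(tD+1)\big)^2=1/(t+1/D)^2$. Both envelopes have the same continuous form, and they differ only by an $\mathcal{O}(1)$ shift in $k$.

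Second, replace the discrete return by its continuous envelope and set it equal to the cost: $1/(k+1/D)^2 = c$. Because $g$ is strictly decreasing in $k$ for $D>0$, the equation has a unique root $k+1/D = 1/\sqrt{c}$. The benefit of sampling falls below the cost exactly once $k \geqslant 1/\sqrt{c} - 1/\hat{\Delta}_k^2$. Taking the infimum over admissible $k\geqslant k_{\text{min}}$ gives \autoref{eq:stop}. The degenerate case $\hat{\Delta}_k^2=0$ needs separate treatment. There the right-hand side is $-\infty$, so we stop immediately at $k_{\text{min}}$. This matches the fact that $\widehat{\text{MSE}}\equiv 0$, so no rollout can help.

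Third, justify calling $K^*$ optimal. Because $g$ is monotone, the problem is monotone in the sense of Chow--Robbins. Once the stopping condition holds, it continues to hold at later steps (for fixed $D$), so the OSLA rule is optimal among stopping rules for the frozen-estimate problem. Treating the $\mathcal{O}(1)$ gap between $g(k)$ and $g(k+1)$ as lower order relative to $1/\sqrt{c}$ as $c\to 0$ gives asymptotic optimality.

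I expect this last point to be the main obstacle. In reality $\hat{\Delta}_k^2$ is re-estimated at each step, so monotonicity holds only approximately. I would handle this by working in the limit $c\to0$. There $K^*\to\infty$, and by the law of large numbers $\hat{\Delta}_k^2\to\Delta^2$ almost surely, which makes the threshold effectively deterministic. The discrepancy from the exact OSLA solution is then $\mathcal{O}(1)=o(1/\sqrt{c})$.
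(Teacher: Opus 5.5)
Your Steps 1--2 follow the paper's argument: freeze $\hat{\Delta}_k^2$ under OSLA, compute the one-step drop in $\widehat{\text{MSE}}$, replace it by a continuous envelope, set that equal to $c$, and solve for $k$. The one difference is which envelope you use. The paper sets the lower envelope of \autoref{lem:marginal_return} equal to $c$. That actually gives $k+1 = 1/\sqrt{c} - 1/\hat{\Delta}_k^2$, and the paper then identifies it with the stated threshold, absorbing the shift by one into the phrase ``continuous boundary variable.'' You use the upper envelope $1/(k+1/\hat{\Delta}_k^2)^2$, which reproduces the stated threshold exactly. It also has a clean meaning. Write $u = k + 1/\hat{\Delta}_k^2$; the exact marginal return is then $1/(u(u+1))$. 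So once $u \geqslant 1/\sqrt{c}$, the return is strictly below $c$ and stopping is certified. The paper's version only certifies that continuing is profitable while $u+1 < 1/\sqrt{c}$. The exact OSLA boundary $u(u+1) = 1/c$ lies strictly between the two. Your separate handling of $\hat{\Delta}_k^2 = 0$ covers a case that the paper's algebra skips, since it divides by $\hat{\Delta}_k^2$.

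Step~3 goes beyond what the paper proves here. The paper's only support for ``optimal'' is the balance equation itself; its asymptotics are deferred to a separate, heuristic regret section. The frozen-estimate part of your Step~3 is fine, and it does not even need Chow--Robbins. With $\hat{\Delta}_k^2$ fixed, $R$ is a deterministic convex function of $k$. So the first $k$ with $R(k+1) \geqslant R(k)$ minimizes it, and your envelope rule stops at most one step later.

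The $c \to 0$ argument, however, fails as written, because the claim $K^* \to \infty$ is false. The rule stops at $k_{\min}$, or at any later $k$, as soon as $\hat{\Delta}_k^2 = 0$, i.e.\ $(\bar{v}_k - V)^2 \leqslant 1/k$. At $k = k_{\min}$ this event has a probability that does not depend on $c$, and it is typically positive even when $\Delta > 0$. Hence $K^*$ stays bounded on an event of non-vanishing probability. The law of large numbers controls $\hat{\Delta}_k^2$ only for large $k$, so it cannot make the threshold effectively deterministic along the stopping path. Step~3 is not needed to match the paper. Either drop it, or restrict the optimality claim to the frozen-estimate problem (or to the event that the early estimates stay bounded away from zero).
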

\end{tcolorbox}
In practice, the rollout generation starts with a small initial size $k_{\text{init}}$ and evaluates the optimal stopping condition in real-time, yielding a dynamic rollout budget $K^*_i$ for each prompt. By substituting the group size $k$ in \autoref{eq:objective} with $K^*_i$, the policy update is driven entirely by robust, sparse rollouts that are adaptively tailored to the empirical uncertainty of each prompt.
\section{Experiments}
\label{sec:experiments}

\subsection{Experimental Setup and Implementation Details}

\subsubsection{Introduction to the Generalist Value Model ($V_0$)}
The Generalist Value Model ($V_0$) avoids the synchronous gradient updates characteristic of traditional Actor-Critic architectures. Instead of implicitly fitting the policy's evolving capabilities, $V_0$ explicitly represents these capabilities using a context of historical query-performance pairs, denoted as $\mathcal{C}_\pi = \{(x_i, r_i)\}$. Consequently, the traditional value estimation $V^\pi(x)$ is reformulated as $V(\mathcal{C}_\pi, x)$. This paradigm allows the value model to be fully pre-trained offline, providing zero-gradient advantage baselines during the RL phase. The network architecture of $V_0$ consists of:

\begin{figure*}[t]
    \centering
    \includegraphics[width=0.8\textwidth]{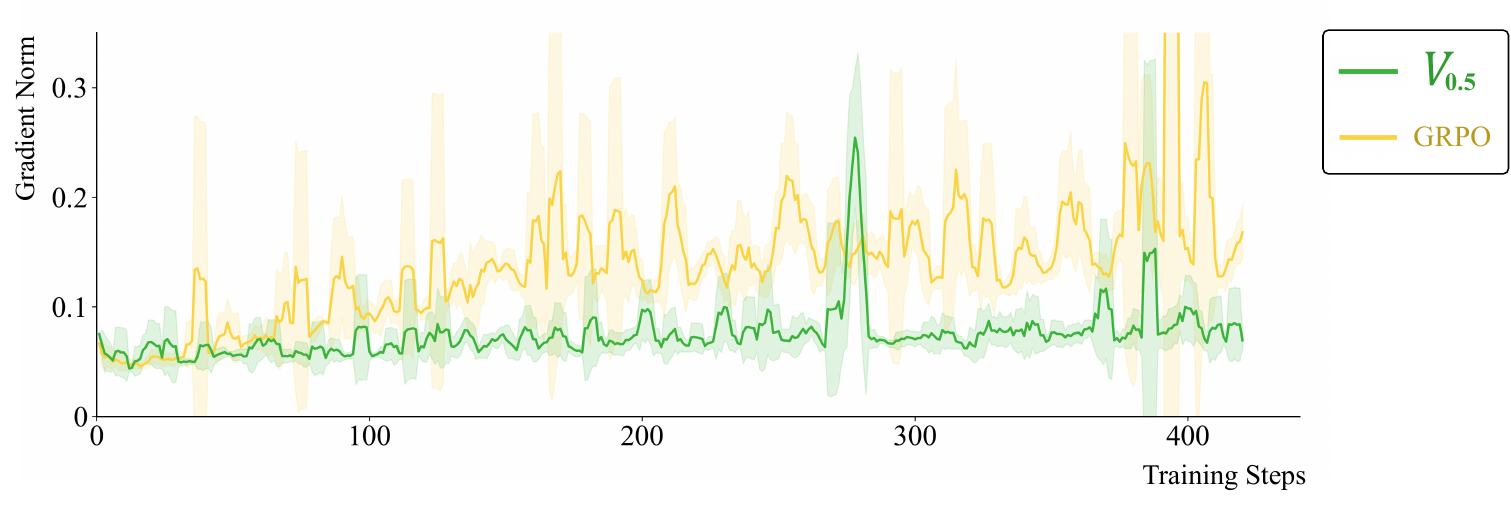}
    \caption{\textbf{Evolution of policy gradient norm.} $V_{0.5}$ maintains a lower and more stable gradient norm than GRPO. By trading a strictly bounded bias for a reduced baseline MSE, it effectively neutralizes the variance amplification inherent in sparse rollouts.}
    \label{fig:grad_norm}
\end{figure*}

\begin{figure*}[t]
    \centering
    \includegraphics[width=0.8\textwidth]{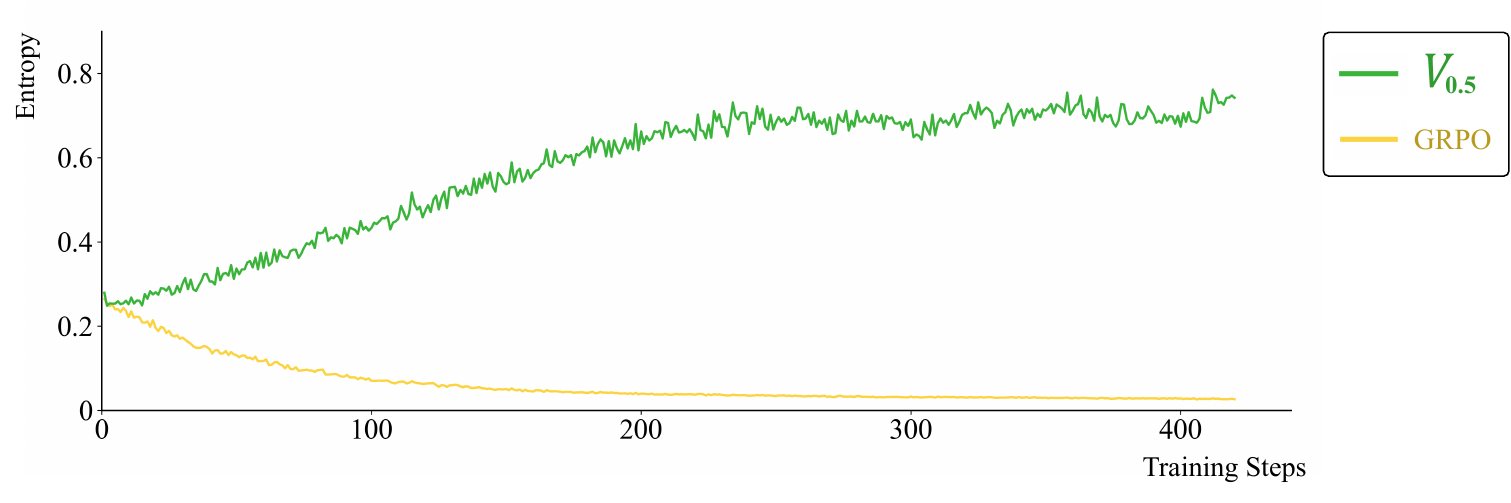}
    \caption{\textbf{Evolution of policy entropy.} While GRPO's high-variance gradients cause rapid entropy decay, $V_{0.5}$ leverages low-noise baseline estimation to sustain higher entropy, ensuring robust exploration in reasoning tasks.}
    \label{fig:entropy}
\end{figure*}

\begin{itemize}[noitemsep,topsep=0pt,leftmargin=*]
    \item \textbf{Semantic-Perception Backbone}: Uses an LLM embedding model to map instructions into semantic vectors.
    \item \textbf{Residual Query Adapter}: Employs learnable queries and a residual mechanism to project the entangled semantic features into a structured, compressed latent space.
    \item \textbf{Probabilistic In-Context Head}: Uses TabPFN to perform single-pass Bayesian inference on the historical pairs $\mathcal{C}_\pi$, yielding the predicted success probability of the target query.
\end{itemize}

\subsubsection{Enhanced Training of $V_0$}
To support complex mathematical reasoning tasks, we trained an enhanced $V_0$ model.

\paragraph{Data Construction.} The value model requires diverse performance data to capture a wide range of capabilities. We sampled GRPO training trajectories from LLMs across various architectural scales. Each trajectory contained over 200 checkpoints, with about 20$k$ rollouts per checkpoint. Building upon the models used for the original $V_0$ ({Qwen3-4B-Instruct-2507}, {Qwen2.5-7B-Instruct}, and {DeepSeek-R1-Distill-Qwen-1.5B}), we expanded the pool to include the full Qwen3 series (0.6B to 30B parameters), with Base, Instruct, and Thinking variants. This data synthesis produced approximately 424$k$ high-quality training pairs reflecting rich evolutionary histories and performance disparities.

\paragraph{Model Architecture and Hyperparameters.} We retained the base $V_0$ architecture. The backbone utilizes the frozen {Qwen3-Embedding-0.6B} ($d_{\text{embed}} = 1024$). The adapter is configured with 168 static queries, a projection dimension of 6, and 3 Multi-Head Attention (MHA) layers. The inference head employs {TabPFN-v2.5}. During training, we randomly sample 256 query-performance pairs to form the capability context $\mathcal{C}_\pi$. The enhanced model was pre-trained on 128 GPUs for approximately 40 hours. All other hyperparameters (\textit{e.g.}, learning rate, loss ratios) remain identical to the original $V_0$.

\subsubsection{Algorithmic Workflow and Implementation of $V_{0.5}$}
The $V_{0.5}$ framework achieves low variance and adaptive compute scheduling in sparse rollout scenarios by tightly integrating \textbf{Empirical Shrinkage Fusion} with \textbf{Sequential OSLA Allocation}.

\begin{figure*}[t!]
    \centering
    \includegraphics[width=\textwidth]{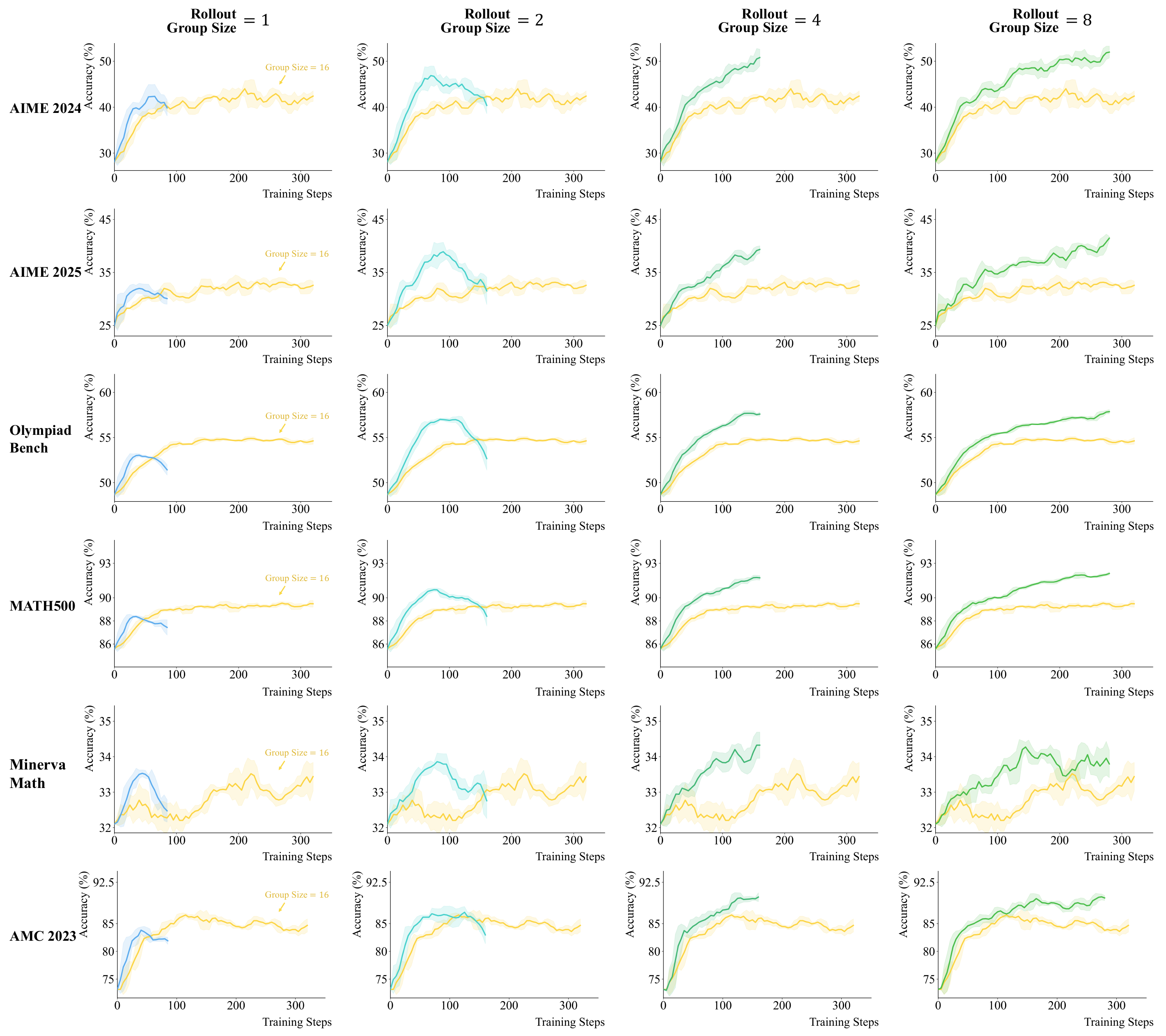}
    \caption{\textbf{Performance of $V_{0.5}$ under extreme sparsity} ($1, 2, 4,$ and $8$ rollouts) \textit{vs.} standard GRPO ($16$ rollouts). To ensure a fair comparison, prompt batch sizes are adjusted to maintain constant per-step computational overhead.}
    \label{fig:main_results}
\end{figure*}

\begin{itemize}[noitemsep,topsep=0pt,leftmargin=*]
    \item \textbf{Step 1: Construct Context and Acquire Prior $V$}. The system maintains a global Support Buffer (capacity 512) storing recent sample performances ($\mathcal{C}_\pi$). It randomly samples 256 pairs (Support Batch Size) and queries the value model API in batches (Query Batch Size) to retrieve the target query's predicted success probability $p \in [0, 1]$.
    \item \textbf{Step 2: Cold Start Allocation}. An initial baseline rollout is executed for all batch prompts. To ensure robust hypothesis testing in the discrete binary reward space $\{-1, 1\}$, we set the initial group size to $k_{\text{init}} = 4$. As detailed in \autoref{app:base_group_size}, this size guarantees the test's tolerance radius fully absorbs the observation gap caused by discrete sampling.
    \item \textbf{Step 3: Initial Bias Evaluation and Hypothesis Testing}. The system uses the empirical mean $\bar{v}_4$ to calculate the squared observation bias $(\bar{v}_4 - V)^2$ against the theoretical noise upper bound $1/4$. If $(\bar{v}_4 - V)^2 \leqslant 1/4$, the prior $V$ is deemed highly reliable ($\hat{\Delta}_k^2 = 0$), the deviation is attributed to observation noise, and no extra compute is required. Conversely, if $\hat{\Delta}_k^2 > 0$, it indicates potential value model hallucination, triggering additional budget allocation.
    \item \textbf{Step 4: Sequential OSLA Allocation and Alignment}. For samples exhibiting bias, the system calculates a compute boundary. Given a marginal step cost $c = 0.0039$, the maximum budget is $1/\sqrt{c} \approx 16$. The dynamic target compute is defined as $k_{\text{target}} = 1/\sqrt{c} - 1/\hat{\Delta}_k^2$. If the current observation count $k < k_{\text{target}}$, $2$ additional rollouts are allocated.
    \begin{itemize}
        \item To maximize distributed hardware utilization, dynamic generation halts globally if fewer than $25\%$ of the batch samples require additional compute.
        \item To prevent resource fragmentation during Tensor Parallelism, the allocated budget for a single dispatch is automatically padded to a multiple of $32$.
    \end{itemize}
    \item \textbf{Step 5: Final Empirical Shrinkage Fusion and Advantage Calculation}. Once dynamic allocation concludes, the system collects the final $k$ observations ($4 \le k \le 16$) per prompt. It updates the empirical mean $\bar{v}_k$, the guaranteed variance estimate $\hat{\sigma}_{\text{noise}}^2$, and the empirical bias $\hat{\Delta}_k^2$. The system then calculates the adaptive shrinkage weight $\hat{w}_k$ to derive the low-variance fused baseline $\hat{\mu}^*$ and deduces the intrinsic standard deviation $\hat{\sigma}^* = \sqrt{1 - (\hat{\mu}^*)^2}$. The final standardized advantage, $A_i = (r_i - \hat{\mu}^*) / \hat{\sigma}^*$, is passed to the surrogate objective function to update the policy.
\end{itemize}

\subsubsection{Training Hyperparameters, Baseline Configurations, and Evaluation Metrics}

\paragraph{Training Hyperparameters.} We conducted all RL training on 4 nodes (32 GPUs total) using the \texttt{sglang} engine. The base policy is {Qwen3-4B-Instruct-2507}, fine-tuned on the DAPO-Math-17k dataset~\cite{DAPO}. The Actor uses the AdamW optimizer with a learning rate of $1 \times 10^{-6}$, $10$ warmup steps, no KL divergence penalty, no learning rate decay, and a global gradient clipping of $1.0$. For both training and evaluation, the maximum response length is $4096$.
To ensure fair comparisons, we maintained a constant total computational overhead per step across all settings. Specifically, the product of the global prompt batch size and the group size per prompt is held constant.

\paragraph{Baseline Configurations.} All baselines operate under identical hardware and software frameworks:
\begin{itemize}[noitemsep,topsep=0pt,leftmargin=*]
    \item {GRPO}: Uses a global prompt batch size of $512$, a fixed group size $G=16$, and a KL penalty coefficient of $0.001$.
    \item {DAPO}: Uses the same batch size of $512$, group size $G=16$, and introduces DAPO-specific advantage filtering and asymmetric clipping. The rollout generation batch size is $512 \times 3$.
\end{itemize}

\paragraph{Evaluation Metrics} We evaluated performance across six mathematical reasoning benchmarks: AIME 2024~\cite{aime2024}, AIME 2025~\cite{aime2025}, Olympiad Bench~\cite{OlympiadBench}, MATH500~\cite{math500}, Minerva Math~\cite{minerva_math}, and AMC 2023~\cite{amc23}. We report the average dataset accuracy using a fixed sampling size of 16 (\texttt{mean@16}). Both training and evaluation rely strictly on rule-based rewards for correctness verification, omitting formatting rewards. Evaluation inference uses a temperature of $1.0$ and a Top-p of $0.7$.
\begin{itemize}[noitemsep,topsep=0pt,leftmargin=*]
    \item \autoref{fig:performance}: Shows the performance of $V_{0.5}$ with the complete OSLA dynamic budget allocation enabled ($k_{\text{init}} = 4$).
    \item \autoref{fig:main_results}: Compares $V_{0.5}$ without OSLA (using fixed group sizes of $1$, $2$, $4$, and $8$) against standard GRPO ($G=16$).
\end{itemize}

\subsection{Main Results and Theoretical Validation}
\label{subsec:main_results}

This section analyzes the comprehensive performance of the $V_{0.5}$ framework under the OSLA dynamic budget allocation setting ($k_{\text{init}}=4$). Furthermore, we dissect its core advantages in gradient stability and policy exploration through the lens of our theoretical derivations.

\paragraph{Overall Performance and Convergence Speed.}
As illustrated in \autoref{fig:performance}, we evaluated $V_{0.5}$ with full OSLA dynamic budget allocation across six diverse mathematical reasoning benchmarks. The results demonstrate the significant superiority of $V_{0.5}$: compared to GRPO and DAPO, $V_{0.5}$ not only achieves faster convergence but also attains over a 10\% improvement in final accuracy. By introducing a generalist value model as a prior, the policy model receives high-quality advantage estimation guidance early in the sparse rollouts, substantially enhancing sample efficiency and final performance.

\paragraph{Mathematical Mechanism of Gradient Norm and Variance Reduction.}
\autoref{fig:grad_norm} illustrates the evolution of the policy gradient norm during training. It is clearly observable that, compared to GRPO, the $V_{0.5}$ framework maintains a lower and more stable gradient norm.
This phenomenon is explained by the \autoref{thm:mse_bound} (proof in \autoref{app:mse_bound}).
\begin{itemize}[noitemsep,topsep=0pt,leftmargin=*]
    \item In LLM training, the expected squared norm of the score function, $\Phi_{\text{score}}$, is inherently massive. GRPO relies solely on extremely sparse empirical sampling, leading to an exceptionally high $\text{MSE}(b)$. This high error is severely amplified by the massive $\Phi_{\text{score}}$, inevitably causing gradient variance explosion and norm oscillation.
    \item $V_{0.5}$ employs Empirical Shrinkage Fusion to integrate the prior $V$, strategically trading a strictly bounded minor bias for a reduction in variance. It intentionally tolerates a mathematically constrained bias ($|\text{Bias}(\hat{\mu}^*)| \le \frac{1}{\sqrt{k}}$) to exchange for a drastic reduction in $\text{MSE}(b)$. This design neutralizes the catastrophic amplification effect caused by $\Phi_{\text{score}}$, guaranteeing the numerical stability of gradients under extreme sparsity and high noise.
\end{itemize}

\paragraph{Policy Entropy Maintenance and Exploration Capability}
The effective reduction of gradient variance not only stabilizes the training process but also directly enhances the policy's exploration behavior. \autoref{fig:entropy} shows the evolution of policy entropy over training steps.
Under sparse rollout scenarios, GRPO's policy entropy decays rapidly. This occurs because high-variance, erroneous gradient signals may force the model into local optima. Conversely, benefiting from its low-variance, low-noise gradient estimation, the $V_{0.5}$ framework sustains a higher entropy level throughout the training cycle. This ensures that the model can robustly conduct exploration within the complex mathematical reasoning space.

\paragraph{Performance under Extreme Sparsity and Limitations of Group Size.} In \autoref{fig:main_results}, we tested sparse group sizes ($k \in \{1, 2, 4, 8\}$) without OSLA dynamic allocation. Notably, simply fusing the prior $V$ with sparse rollouts ($k \in \{4, 8\}$) is sufficient to outperform standard GRPO ($G=16$), strongly validating the core variance reduction of our shrinkage estimator. However, training fails to converge at extreme sparsity ($k \in \{1, 2\}$). This aligns with our base group size derivations (\autoref{app:base_group_size}). In the binary reward space $\{-1, 1\}$, our hypothesis test uses a theoretical maximum noise bound ($1/\sqrt{k}$) to absorb normal observation fluctuations. For $k \in \{1, 2\}$, the large discrete quantization gaps between possible empirical means exceed this tolerance radius. Consequently, normal sampling variance frequently triggers false rejections of the prior $V$. Discarding this stabilizing prior in favor of a highly noisy empirical mean induces severe gradient variance.

\section{Related Work}

In RL for LLMs, the estimation quality of the advantage baseline directly determines the variance and stability of policy gradient updates. To eliminate the substantial computational overhead introduced by auxiliary value models in traditional PPO~\cite{ppo,VAPO}, GRPO~\cite{DeepSeek_Math} proposes directly utilizing the average reward of intra-group samples as an empirical baseline. Concurrently, ReMax~\cite{remax} replaces the value network with rewards obtained from the model's own greedy decoding, while OPO~\cite{opo} derives an optimal baseline approximation using response-length-weighted average rewards, grounded in the assumption of gradient orthogonality.

However, standard empirical means can be sensitive to outlying rewards, often failing in sparse-rollout scenarios. To address this, MC-GRPO~\cite{mc_grpo} introduces the median baseline and the Median Absolute Deviation (MAD) to resist outliers that trigger advantage sign flipping. QAE~\cite{qae} designs a $K$-quantile dual-state gated baseline to filter noise and prevent entropy explosion, whereas BNPO~\cite{bnpo} models expected rewards as a Beta distribution, dynamically computing optimal normalization parameters via moment estimation. Further research reveals that relying solely on intra-group sampling introduces statistical biases. For instance, HA-DW~\cite{ha_dw} demonstrates that group means underestimate the advantage of difficult prompts, thus introducing Kalman filter-based history-aware anchors.

Moreover, targeting ambiguous credit assignment in long-horizon or structured tasks, Turn-PPO~\cite{turn_ppo} advocates aligning actions and states to physical turns to provide turn-level baselines. GiGPO~\cite{gigpo} designs a bi-level relative advantage baseline combining global normalization with micro-anchor grouping, and Tree-OPO~\cite{tree_opo} formulates advantage computation as a constrained quadratic programming problem that respects the topological logic of Monte Carlo trees.
Despite significant advancements in lightweight design, robustness, and bias control, these methods may remain constrained by high variance and estimation bias in extreme sparse-sampling environments when relying exclusively on empirical data. Our $V_{0.5}$ elegantly resolves this dilemma by introducing the Generalist Value Model ($V_0$) as an explicit prior. It innovatively incorporates real-time statistical testing and dynamic One-Step-Look-Ahead (OSLA) sequential budget allocation to adaptively fuse sparse empirical means with prior predictions.

\section{Conclusion}

In this paper, we introduced the $V_{0.5}$ framework, representing the next-generation evolution of generalist value models. By seamlessly uniting adaptive baseline estimation with dynamic budget allocation, this framework robustly optimizes advantage estimation with the generalist value model as a statistical prior. We demonstrate that the proposed Empirical Shrinkage Fusion effectively minimizes the baseline's Mean Squared Error (MSE). Safeguarded by real-time hypothesis testing, $V_{0.5}$ bounds the negative effects of prior hallucinations, ensuring stable policy gradient convergence even under extreme sparsity with a group size of merely 4. Furthermore, our Sequential OSLA Allocation mechanism reframes baseline estimation as a continuous dynamic scheduling problem, enabling the automatic, on-demand adjustment of rollout budgets. Extensive evaluations across six diverse mathematical reasoning benchmarks establish that $V_{0.5}$ outperforms both GRPO and DAPO, achieving faster convergence and some over 10\% performance improvement.
Looking ahead, we aim to construct and pre-train a Process-level Generalist Value Model. By providing finer-grained guidance signals for trajectories, we expect substantial breakthroughs in enhancing the exploration efficiency of increasingly complex, long-horizon tasks.

\section*{Acknowledgments}
The authors would like to thank Ying Zeng from Peking University for the valuable discussions on the theoretical analysis.

\section*{Impact Statement}
This paper presents work whose goal is to advance the field of Machine Learning. There are many potential societal consequences of our work, none which we feel must be specifically highlighted here.

\bibliography{example_paper}
\bibliographystyle{icml2026}

\newpage
\appendix
\onecolumn

\section*{Appendix}

\section{Theoretical Analysis and Proofs}

\subsection{Extended Statistical Motivation (\autoref{subsec:core_logic})}

\textbf{Objective:} In \autoref{subsec:core_logic}, we construct a pipeline consisting of prior prediction, sparse rollouts, deviation testing and fusion, and dynamic budget allocation. The underlying logic of this design addresses the statistical tradeoff inherent in the \textit{exploration-exploitation dilemma} within the continuous decision spaces.

Under extreme sparse rollouts scenarios, although the empirical mean $\bar{v}_k$ is unbiased, its signal-to-noise ratio is exceedingly low. If the system relies exclusively on $\bar{v}_k$ to determine whether to allocate additional rollouts, it is highly susceptible to being misled by the variance of a single extreme rollout, potentially leading to unbounded computational consumption. The primary objective of introducing the prior $V$ in $V_{0.5}$ is to establish a zero-variance statistical anchor without incurring additional rollout budgets. Subsequent mechanisms, including shrinkage fusion and hypothesis testing, essentially quantify the statistical significance of the current empirical distribution's deviation from this anchor. When the deviation significantly exceeds the theoretical noise boundary, the system identifies the anchor as invalid, activating the protocol for additional budget allocation.

\subsection{Bounding Policy Gradient Variance with Baseline MSE (Proof of \autoref{thm:mse_bound})}
\label{app:mse_bound}

\textbf{Objective:} In \autoref{subsec:motivation_baseline_mse}, we posit that minimizing the MSE of the baseline estimator $\mu$ is mathematically equivalent to suppressing the unbounded gradient variance during LLM training. We prove that the trace of the policy gradient covariance matrix is strictly bounded by the baseline MSE.

\begin{proof}[Proof of \autoref{thm:mse_bound}]
Let the single-step policy gradient estimator be defined as $\hat{g}(\theta) = \nabla_\theta \log \pi_\theta(o|x) (r - \mu)$. Because introducing any baseline $\mu$ independent of the current action $o$ preserves the unbiasedness of the policy gradient (\textit{i.e.}, $\mathbb{E}[\hat{g}(\theta)] = g_{\text{true}}$), the trace of its covariance matrix is formulated as:
\begin{equation}
\text{Tr}(\text{Var}(\hat{g}(\theta))) = \mathbb{E}_{o,\mu}[\|\hat{g}(\theta)\|^2] - \|g_{\text{true}}\|^2
\end{equation}
Minimizing the variance is analytically equivalent to minimizing the second moment of the gradient. We introduce the true expected return $\mu_{\text{true}}$ into the squared advantage term $(r - \mu)^2$ and apply algebraic expansion:
\begin{equation}
(r - \mu)^2 = (r - \mu_{\text{true}})^2 + (\mu_{\text{true}} - \mu)^2 + 2(r - \mu_{\text{true}})(\mu_{\text{true}} - \mu)
\end{equation}
Given that the score function $\nabla_\theta \log \pi_\theta(o|x)$ depends exclusively on the policy's intrinsic sampling distribution and remains independent of the externally estimated baseline $\mu$, we substitute the expanded equation back into the second moment and decompose it via linearity into three distinct components:

\textbf{1. Oracle Variance Term} (Theoretical Lower Bound):
\begin{equation}
\mathbb{E}_o \left[ \|\nabla_\theta \log \pi_\theta(o|x)\|^2 (r - \mu_{\text{true}})^2 \right]
\end{equation}
This term represents the irreducible variance when the baseline is perfectly predicted ($\mu = \mu_{\text{true}}$), defined as $\text{Var}_{\text{oracle}}$.

\textbf{2. MSE Propagation Term} (Variance Amplifier):
\begin{equation}
\mathbb{E}_{o,\mu} \left[ \|\nabla_\theta \log \pi_\theta(o|x)\|^2 (\mu_{\text{true}} - \mu)^2 \right]
\end{equation}
This component strictly factorizes into the expected squared norm of the score function $\Phi_{\text{score}} = \mathbb{E}_o[\|\nabla_\theta \log \pi_\theta(o|x)\|^2]$ multiplied by the baseline $\text{MSE}(\mu)$. For models comprising billions of parameters, $\Phi_{\text{score}}$ is exceptionally large; consequently, any marginal baseline error undergoes severe proportional amplification.

\textbf{3. Bias Cross-Perturbation Term} (Controlled Penalty):
\begin{equation}
2 \mathbb{E}_{o,\mu} \left[ \|\nabla_\theta \log \pi_\theta(o|x)\|^2 (r - \mu_{\text{true}})(\mu_{\text{true}} - \mu) \right] = 2 \mathbb{E}_o[\dots] \cdot (-\text{Bias}(\mu))
\end{equation}
Defining the absolute value of the trace of the policy's intrinsic constant matrix as $L/2$, the magnitude of this cross term is rigorously bounded by $L \cdot |\text{Bias}(\mu)|$.

Synthesizing the three terms yields the comprehensive upper bound for the policy gradient variance:
\begin{equation}
\text{Tr}(\text{Var}(\hat{g}(\theta))) \leqslant \text{Var}_{\text{oracle}} + \Phi_{\text{score}} \cdot \text{MSE}(\mu) + L \cdot |\text{Bias}(\mu)|
\end{equation}
This establishes the theoretical validity of the framework, proving that trading a marginally controlled bias for a substantial reduction in MSE is mathematically optimal.
\end{proof}

\subsection{Orthogonal Error Decomposition and Optimal Shrinkage Weight (Proof of \autoref{thm:mse_decomp} and \autoref{thm:opt_weight})}

\textbf{Objective:} To prove that the MSE of the shrinkage estimator $\mu^* = w\bar{v}_k + (1 - w)V$ admits an orthogonal decomposition, and to derive the closed-form optimal weight $w^*$.

\begin{proof}[Proof of \autoref{thm:mse_decomp}]
By definition, the mean squared error is $\text{MSE}(w) = \mathbb{E}[(\mu^* - \mu_{\text{true}})^2]$. Substituting the estimator formulation and partitioning the constant $\mu_{\text{true}}$ via $1 = w + (1 - w)$ yields:
\begin{equation}
\text{MSE}(w) = \mathbb{E}[ (w(\bar{v}_k - \mu_{\text{true}}) + (1 - w)(V - \mu_{\text{true}}))^2 ]
\end{equation}
Expanding the quadratic term:
\begin{equation}
\text{MSE}(w) = w^2 \mathbb{E}[(\bar{v}_k - \mu_{\text{true}})^2] + (1 - w)^2(V - \mu_{\text{true}})^2 + 2w(1 - w)(V - \mu_{\text{true}})\mathbb{E}[\bar{v}_k - \mu_{\text{true}}]
\end{equation}
Because $\bar{v}_k$ operates as an unbiased estimator (\textit{i.e.}, $\mathbb{E}[\bar{v}_k - \mu_{\text{true}}] = 0$), the cross term evaluates strictly to 0. Substituting the observational variance $\sigma_{\text{noise}}^2$ and the prior bias $\Delta^2$ results in the orthogonal decomposition:
\begin{equation}
\text{MSE}(w) = w^2 \sigma_{\text{noise}}^2 + (1 - w)^2 \Delta^2
\end{equation}
\end{proof}

\begin{proof}[Proof of \autoref{thm:opt_weight}]
Differentiating the decomposed MSE objective with respect to $w$ and setting the derivative to 0:
\begin{equation}
\frac{d \text{MSE}(w)}{d w} = 2w \sigma_{\text{noise}}^2 - 2(1 - w)\Delta^2 = 0
\end{equation}
Solving this linear equation yields the unique stationary point $w^* = \frac{\Delta^2}{\Delta^2 + \sigma_{\text{noise}}^2}$. The second derivative is $2\sigma_{\text{noise}}^2 + 2\Delta^2 > 0$, guaranteeing this point is the global minimum.
\end{proof}

\subsection{Equivalence of Truncation to Hypothesis Testing}
\label{app:truncation_2_hypothesis_testing}
\textbf{Objective:} To formalize the statistical properties of the empirical bias estimator $\hat{\Delta}_k^2 = \max(0, (\bar{v}_k - V)^2 - 1/k)$. We demonstrate that this positive-part truncation functionally maps to a statistical hypothesis test with a defined rejection region.

During the fusion of prior knowledge and empirical observations, the primary uncertainty lies in whether the value model's prior prediction $V$ contains systematic hallucinations (\textit{i.e.}, true $\Delta^2 > 0$). We define the null hypothesis as the absence of systematic error in the prior:
\begin{itemize}[noitemsep,topsep=0pt,leftmargin=180pt]
\item $H_0: \Delta^2 = 0$ (\textit{i.e.}, $V = \mu_{\text{true}}$)
\item $H_1: \Delta^2 > 0$
\end{itemize}
Under a maximum entropy assumption within a binary reward space, the theoretical variance upper bound for a single rollout is exactly 1. Consequently, the variance of the empirical mean $\bar{v}_k$ is tightly bounded by $\sigma_{\text{noise}}^2 \leqslant 1/k$. Assuming the null hypothesis $H_0$ holds, the expected squared distance between the empirical observation and the prior, $(\bar{v}_k - V)^2$, must logically equal the pure observation noise:
\begin{equation}
\mathbb{E}[(\bar{v}_k - V)^2 \mid H_0] = \mathbb{E}[(\bar{v}_k - \mu_{\text{true}})^2] = \sigma_{\text{noise}}^2 \leqslant 1/k
\end{equation}
The truncation operation $\max(0, \cdot)$ executing in real-time precisely mirrors hypothesis testing logic:

\begin{enumerate}[noitemsep,topsep=0pt,leftmargin=*]
\item \textbf{Acceptance Region:} If the divergence satisfies $(\bar{v}_k - V)^2 \leqslant 1/k$, the system attributes this fluctuation to random noise. Lacking evidence to reject $H_0$, truncation activates ($\hat{\Delta}_k^2 = 0$). The system discards surface-level error, relying on the prior to minimize variance.
\item \textbf{Rejection Region:} If the divergence violates the boundary $(\bar{v}_k - V)^2 > 1/k$, the realized error exceeds any theoretically plausible statistical noise, enforcing a strict rejection of $H_0$. Truncation is bypassed, and the system linearly subtracts the expected noise baseline $(\bar{v}_k - V)^2 - 1/k$ to isolate the true magnitude of the underlying prior bias.
\end{enumerate}

\subsection{Finite-Sample and Asymptotic Bias Bounds of the Shrinkage Estimator (Proof of \autoref{thm:bias_bound})}

\textbf{Objective:} In practical deployment, the adaptive weight $\hat{w}_k = \frac{\hat{\Delta}_k^2}{\hat{\Delta}_k^2 + 1/k}$ acts as a random variable correlated with the empirical observation $\bar{v}_k$. This correlation violates the unbiasedness assumption of static shrinkage. This section quantifies the induced bias $\text{Bias}(\hat{\mu}^*) = \mathbb{E}[\hat{\mu}^*] - \mu_{\text{true}}$ and proves the two fundamental safety bounds defined in \autoref{thm:bias_bound}.

\begin{proof}[Proof of \autoref{thm:bias_bound}]
We reconstruct the empirical estimator $\hat{\mu}^*$ to isolate its unbiased component from the bias-inducing correction term:
\begin{equation}
\hat{\mu}^* = \hat{w}_k \bar{v}_k + (1 - \hat{w}_k)V = \bar{v}_k - (1 - \hat{w}_k)(\bar{v}_k - V)
\end{equation}
Taking the mathematical expectation of both sides and subtracting the objective expectation $\mu_{\text{true}}$ yields an expected error of 0 for the leading term, as $\bar{v}_k$ is strictly unbiased. Thus, all systemic bias is isolated within the correction term:
\begin{equation}
\text{Bias}(\hat{\mu}^*) = -\mathbb{E}\left[ (1 - \hat{w}_k)(\bar{v}_k - V) \right]
\end{equation}
Substituting the analytical formula for the adaptive weight, we define the inner random entity as variable $Z$:
\begin{equation}
Z = \frac{1/k}{\hat{\Delta}_k^2 + 1/k} (\bar{v}_k - V)
\end{equation}
The proof objective fundamentally reduces to bounding the absolute value of $Z$.

\textbf{Property 1 in \autoref{thm:bias_bound}:}
Next, we leverage the piecewise nature of the positive-part truncation in $\hat{\Delta}_k^2$ to exhaustively analyze arbitrary rollout across the entire sample space:

\begin{enumerate}[noitemsep,topsep=0pt,leftmargin=*]
\item \textbf{Case 1} (No truncation penalty): When $(\bar{v}_k - V)^2 \leqslant 1/k$, truncation enforces $\hat{\Delta}_k^2 = 0$. Substituting this into $Z$ simplifies the expression to $Z = \bar{v}_k - V$. Applying the square root to the piecewise condition directly yields:
\begin{equation}
|Z| \leqslant \frac{1}{\sqrt{k}}
\end{equation}

\item \textbf{Case 2} (Truncation penalty triggered): When $(\bar{v}_k - V)^2 > 1/k$, truncation is inactive, maintaining $\hat{\Delta}_k^2 = (\bar{v}_k - V)^2 - 1/k$. The denominator of $Z$ algebraically reduces to exactly $(\bar{v}_k - V)^2$. Thus:
\begin{equation}
Z = \frac{1/k}{(\bar{v}_k - V)^2} (\bar{v}_k - V) = \frac{1/k}{\bar{v}_k - V}
\end{equation}
Given the active condition $|\bar{v}_k - V| > 1/\sqrt{k}$, its reciprocal rigorously satisfies $\frac{1}{|\bar{v}_k - V|} < \sqrt{k}$. Multiplying by the fixed numerator $1/k$ produces:
\begin{equation}
|Z| < \frac{1/k}{1/\sqrt{k}} = \frac{1}{\sqrt{k}}
\end{equation}
\end{enumerate}

Synthesizing both cases, the truncation architecture mathematically guarantees that $|Z| \leqslant 1/\sqrt{k}$ almost surely across the probability space. Applying the triangle inequality for expectations establishes the absolute bound:
\begin{equation}
|\text{Bias}(\hat{\mu}^*)| \leqslant \mathbb{E}[|Z|] \leqslant \frac{1}{\sqrt{k}}
\end{equation}
\textbf{Theoretical Guarantee:} Regardless of severe observation noise encountered during extreme sparse rollouts, the systemic bias introduced by dynamic fusion is constrained within a safe upper envelope, precluding numerical gradient explosion.

\textbf{Property 2 in \autoref{thm:bias_bound}:}
Assuming an objective prior bias exists ($\Delta = |V - \mu_{\text{true}}| \neq 0$), we introduce a minimal error tolerance $\epsilon = \Delta/2 > 0$. Leveraging Hoeffding's inequality, we partition the probability space into two mutually exclusive events:
\begin{enumerate}[noitemsep,topsep=0pt,leftmargin=*]
\item \textbf{Case 1} (typical event, $|\bar{v}_k - \mu_{\text{true}}| \leqslant \epsilon$):
The observation converges closely to the objective truth. By the reverse triangle inequality, the surface difference $|\bar{v}_k - V|$ is bounded below:
\begin{equation}
|\bar{v}_k - V| \geqslant |V - \mu_{\text{true}}| - |\bar{v}_k - \mu_{\text{true}}| \geqslant \Delta - \frac{\Delta}{2} = \frac{\Delta}{2}
\end{equation}
Given sufficient compute $k$ (\textit{i.e.}, $1/\sqrt{k} < \Delta/2$), the system deterministically stabilizes in Case 2. Scaling $|Z|$ correspondingly yields:
\begin{equation}
|Z| = \frac{1/k}{|\bar{v}_k - V|} \leqslant \frac{1/k}{\Delta/2} = \frac{2}{k\Delta}
\end{equation}
Thus, the expected bias contribution from typical events is strictly upper-bounded by $\mathcal{O}(1/k)$.

\item \textbf{Case 2} (Rare Event $A^c$, $|\bar{v}_k - \mu_{\text{true}}| > \epsilon$):
According to Hoeffding's inequality, the probability of such severe deviations decays exponentially with $k$ rollouts, expressed as $\mathbb{P}(A^c) = \mathcal{O}(e^{-ck})$. Because $|Z|$ remains globally bounded by $1/\sqrt{k}$, the expected contribution from rare events is:
\begin{equation}
\mathbb{E}[|Z| \mid A^c] \cdot \mathbb{P}(A^c) \leqslant \frac{1}{\sqrt{k}} \cdot \mathcal{O}(e^{-ck})
\end{equation}
\end{enumerate}
Because the exponential decay $\mathcal{O}(e^{-ck})$ approaches zero substantially faster than the polynomial decay $1/k$, the total asymptotic limit is entirely dictated by typical events, proving:
\begin{equation}
|\mathbb{E}[\hat{\mu}^*] - \mu_{\text{true}}| = \mathcal{O}\left(\frac{1}{k}\right)
\end{equation}
\textbf{Theoretical Guarantee:} While standard empirical estimation errors decay at a rate of $\mathcal{O}(1/\sqrt{k})$, the targeted systemic bias induced by the framework dissipates at a highly aggressive, super-linear rate of $\mathcal{O}(1/k)$, establishing a mathematically superior variance reduction tradeoff.
\end{proof}

\subsection{Marginal Return Envelope and Optimal Stopping Rule (Proof of \autoref{lem:marginal_return} and \autoref{thm:optimal_stop})}

\textbf{Objective:} To formally derive the lower-bound envelope of marginal statistical returns for sequential rollouts (\autoref{lem:marginal_return}) and to establish the continuous optimal stopping boundary for adaptive budget allocation (\autoref{thm:optimal_stop}).

\begin{proof}[Proof of \autoref{lem:marginal_return}]
The real-time empirical MSE after $k$ steps is expressed as:
\begin{equation}
\widehat{\text{MSE}}(k) = \frac{\hat{\Delta}_k^2}{k \hat{\Delta}_k^2 + 1}
\end{equation}
Defining the single-step marginal return $g(k)$ as the expected reduction in empirical MSE from one additional rollout, and assuming local smoothness under the One-Step-Look-Ahead (OSLA) framework (\textit{i.e.}, $\hat{\Delta}_{k+1}^2 \approx \hat{\Delta}_k^2$), we have:
\begin{equation}
g(k) = \widehat{\text{MSE}}(k) - \widehat{\text{MSE}}(k+1) = \frac{\hat{\Delta}_k^2}{k \hat{\Delta}_k^2 + 1} - \frac{\hat{\Delta}_k^2}{(k+1) \hat{\Delta}_k^2 + 1} = \frac{\hat{\Delta}_k^4}{(k \hat{\Delta}_k^2 + 1)((k+1) \hat{\Delta}_k^2 + 1)}
\end{equation}
Since $k \geqslant 0$ and $\hat{\Delta}_k^2 \geqslant 0$, the denominator factors inherently satisfy $(k \hat{\Delta}_k^2 + 1) < ((k+1) \hat{\Delta}_k^2 + 1)$. Replacing the smaller polynomial with the larger one, generating a lower-bound envelope:
\begin{equation}
g(k) > \frac{\hat{\Delta}_k^4}{((k+1) \hat{\Delta}_k^2 + 1)^2}
\end{equation}
This ensures that if this minimal envelope exceeds the cost factor, continuing to rollout remains definitively profitable.
\end{proof}

\begin{proof}[Proof of \autoref{thm:optimal_stop}]
To locate the optimal stopping threshold, we balance the marginal return bound with the normalized marginal compute cost $c$, and taking the square root yields:
\begin{equation}
\frac{\hat{\Delta}_k^4}{((k+1) \hat{\Delta}_k^2 + 1)^2} = c, \,\,\, \frac{\hat{\Delta}_k^2}{(k+1) \hat{\Delta}_k^2 + 1} = \sqrt{c}
\end{equation}
And then, we have:
\begin{equation}
(k+1) \hat{\Delta}_k^2 + 1 = \frac{\hat{\Delta}_k^2}{\sqrt{c}}
\end{equation}
Isolating the continuous boundary variable $(k+1)$:
\begin{equation}
(k+1) = \frac{1}{\sqrt{c}} - \frac{1}{\hat{\Delta}_k^2}
\end{equation}
This closed-form formulation perfectly defines the optimal stopping threshold $K^*$ introduced in \autoref{eq:stop}.

\textbf{Application:} This formulation elegantly transitions reinforcement learning compute scheduling into a real-time feedback control mechanism. The leading term $1/\sqrt{c}$ operates as the theoretical maximum compute budget constrained by fixed costs. The subtraction term $1/\hat{\Delta}_k^2$ functions as an adaptive decay penalty driven by real-time observed bias. Severe prior hallucinations diminish the decay term, expanding the boundary toward the maximum budget; conversely, an accurate prior rapidly inflates the decay term, dynamically collapsing the target compute boundary and executing an early halt.
\end{proof}

\subsection{Finite-Cost Regret Bound of Sequential Scheduling}

\textbf{Objective:} In the dynamic budget allocation process, the system relies on the empirical estimate $\hat{\Delta}_k^2$, which contains statistical noise, to make real-time decisions on when to halt rollout. Compared to an ideal oracle system that knows the true bias $\Delta^2$ in advance and directly outputs the perfect stopping step $k_{\text{oracle}}^*$, our adaptive scheduling inevitably incurs an excess trial cost. We define this excess cost as the finite-cost regret:
\begin{equation}
\text{Regret}(c) = \mathbb{E}[R(K^*)] - R(k_{\text{oracle}}^*)
\end{equation}
where $R(k) = \widehat{\text{MSE}}(k) + c \cdot k$ is the total risk function of the system. This part proves that the upper bound of this regret is enveloped by the marginal cost itself, specifically $\mathcal{O}(c)$.

\begin{proof}
\textbf{Step 1:} Apply a Taylor expansion to the risk function to quantify the penalty curvature at the optimal stopping point.

We extend the discrete system risk function to a continuous one:
\begin{equation}
R(x) = \frac{\Delta^2}{x \Delta^2 + 1} + c x
\end{equation}
Based on the derivation in \autoref{thm:optimal_stop}, the global continuous minimum under the oracle perspective is $x^* = \frac{1}{\sqrt{c}} - \frac{1}{\Delta^2}$.
We perform a second-order Taylor expansion of the risk function at $x^*$ for the actual stopping time $K^*$. Since the first derivative at the extremum is $R'(x^*) = 0$, the regret is entirely dominated by the second-order term:
\begin{equation}
\text{Regret}(c) \approx \frac{1}{2} R''(x^*) \mathbb{E}[(K^* - x^*)^2]
\end{equation}

\textbf{Analysis:} This Taylor expansion strategically decomposes the excess cost into two components:
\begin{enumerate}[noitemsep,topsep=0pt,leftmargin=*]
\item \textbf{Curvature at the minimum $R''(x^*)$:} This represents the steepness of the risk function near the optimal point. A steeper curvature implies a more severe penalty for deviating from the optimal stopping step.
\item \textbf{Variance of the stopping step $\mathbb{E}[(K^* - x^*)^2]$:} This quantifies how far the actual stopping point deviates from the ideal point due to observation noise.
\end{enumerate}

To evaluate the curvature, we take the second derivative of the risk function:
\begin{equation}
R''(x) = \frac{2\Delta^6}{(x \Delta^2 + 1)^3}
\end{equation}
Substituting the minimum point $x^*$, the denominator simplifies to $(\Delta^2/\sqrt{c})^3$. Consequently, the curvature at this point is heavily dependent on the compute cost $c$:
\begin{equation}
R''(x^*) = 2 c^{3/2}
\end{equation}

\textbf{Step 2:} Apply the Delta method to translate empirical estimation noise into the MSE of the stopping time.

The discrepancy between the actual stopping time $K^*$ and the theoretical time $x^*$ originates from replacing the unknown true bias $\Delta^2$ with the noisy empirical bias $\hat{\Delta}_{K^*}^2$.
To translate this parameter estimation error into a decision step error, we apply the Delta method (first-order Taylor expansion) to analyze the fluctuation of the stopping step:
\begin{equation}
K^* - x^* \approx \left( \frac{1}{\sqrt{c}} - \frac{1}{\hat{\Delta}_{K^*}^2} \right) - \left( \frac{1}{\sqrt{c}} - \frac{1}{\Delta^2} \right) = \frac{1}{\Delta^2} - \frac{1}{\hat{\Delta}_{K^*}^2} \approx \frac{\hat{\Delta}_{K^*}^2 - \Delta^2}{\Delta^4}
\end{equation}
According to statistical estimation theory, near the stopping boundary (where the expected step count is $x^* \approx 1/\sqrt{c}$), the variance of the empirical variance estimator decays inversely with the rollout size:
\begin{equation}
\text{Var}(\hat{\Delta}_{K^*}^2) = \mathcal{O}\left(\frac{1}{x^*}\right) = \mathcal{O}(\sqrt{c})
\end{equation}
Therefore, the mean squared error of the stopping step induced purely by parameter estimation noise is:
\begin{equation}
\mathbb{E}[(K^* - x^*)^2] = \mathcal{O}(\sqrt{c})
\end{equation}

\textbf{Step 3:} Incorporate discrete and nonlinear corrections to establish the final upper bound for the total expected regret.

Multiplying the mean squared error $\mathcal{O}(\sqrt{c})$ obtained in Step 2 back into the Taylor expansion from Step 1 yields the magnitude of the regret caused by pure parameter estimation error:
\begin{equation}
\text{Regret}_{\text{estimation}} \approx c^{3/2} \cdot \mathcal{O}(\sqrt{c}) = \mathcal{O}(c^2)
\end{equation}
However, the continuous Taylor expansion above conceals two discrete and nonlinear penalties present in the actual system. The true total regret must superimpose the following correction terms:
\begin{enumerate}[noitemsep,topsep=0pt,leftmargin=*]
\item \textbf{Discretization Overshoot:} The actual rollout step size must be an integer. When the system crosses the continuous theoretical stopping boundary, a constant integer overshoot error with a mean squared magnitude of $\mathcal{O}(1)$ is inevitable. Multiplying this by the curvature $c^{3/2}$, the regret contributed by this term is $\mathcal{O}(c^{3/2})$.
\item \textbf{Renewal Correlation:} According to nonlinear renewal theory, the decision boundary triggering the stop action is not statistically independent of the current empirical estimator, because the current data dictates the stopping action. This covariance introduces a first-order shift. After rigorous mapping, the penalty introduced to the total risk function is strictly bounded at the $\mathcal{O}(c)$ scale.
\end{enumerate}

Synthesizing these three orthogonal terms, the overall expected regret of the system is:
\begin{equation}
\text{Regret}(c) = \mathcal{O}(c^2) + \mathcal{O}(c^{3/2}) + \mathcal{O}(c)
\end{equation}

\textbf{Analysis:} In the context of RL rollouts, the marginal compute cost $c$ of a single step is inherently a minute constant (\textit{e.g.}, $c = 0.0039$). For values strictly less than 1, we have $c \gg c^{3/2} \gg c^2$. Consequently, the lowest-order term $\mathcal{O}(c)$ dictates the entire polynomial.
Thus, the total system regret is safely enveloped by:
\begin{equation}
\text{Regret}(c) \leqslant \mathcal{O}(c)
\end{equation}
\end{proof}

\textbf{Theoretical and Engineering Significance:}
This proof establishes a robust theoretical guarantee for the deployment of the $V_{0.5}$ framework.
It mathematically demonstrates that abandoning a rigid fixed batch size and allowing the system to perform adaptive inference under extreme sparsity and high noise inherently introduces inference errors. However, the expected excess cost incurred to locate the optimal stopping point is merely equivalent to a constant number of compute rollouts (strictly proportional to $c$ itself, potentially equating to just one or two rollouts).
This powerful bound ensures that the system can aggressively pursue high-precision baseline estimation while remaining completely immune to unbounded compute cost explosions caused by dynamic decision errors.

\subsection{Analysis of the Base Group Size}
\label{app:base_group_size}
\textbf{Objective:} To analyze the constraint on the base rollout group size $k_{\text{min}}$ introduced in \autoref{thm:optimal_stop}. We derive the lower bound for the minimum initial rollout size to ensure the statistical robustness of the hypothesis testing mechanism within a discrete binary reward space.

\begin{proof}
In the preceding analysis, the truncation mechanism of the empirical bias estimator $\hat{\Delta}_k^2$ operates equivalently to a hypothesis test in a continuous space. Considering the reward space defined in this paper is a discrete binary space $\{-1, 1\}$, if the initial rollout size $k$ is overly sparse, the discrete jumps in observations will violate the smooth boundary assumptions of the hypothesis test.
Under the binary reward $r \in \{-1, 1\}$, if there are $x$ successful responses within $k$ rollouts, the empirical mean is $\bar{v}_k = \frac{2x}{k} - 1$. The discrete quantization gap between adjacent possible observation values is:
\begin{equation}
\operatorname{Gap}(k) = \frac{2}{k}
\end{equation}
Simultaneously, the testing threshold, representing the tolerance radius for accepting the prior $V$, is defined by the maximum entropy standard deviation:
\begin{equation}
\operatorname{Threshold}(k) = \frac{1}{\sqrt{k}}
\end{equation}

To guarantee the statistical robustness of the hypothesis test, the testing tolerance radius must be greater than or equal to the discrete jump gap induced by a single observation change. Failing this, the random fluctuation of a single rollout will compel the mean to bypass the entire confidence interval, inciting frequent misjudgments and system instability during high-frequency oscillations.

Assuming the value model provides a high-confidence prior $V = 0.8$. For small values of $k$, the system is compromised because the discrete gap strictly exceeds the tolerance radius:
\begin{itemize}[noitemsep,topsep=0pt,leftmargin=*]
\item {When $k = 1$:} $\text{Gap}(1) = 2$, and the testing radius is $1$. If a single rollout yields $-1$, the empirical error is $|-1 - 0.8| = 1.8 > 1$, resulting in an immediate rejection. A single rollout's noise strictly dominates the test outcome.
\item {When $k = 2$:} $\text{Gap}(2) = 1$, and the testing radius is approximately $0.707$. If a rollout yields $1$ and $-1$ outcome, the error is $|0 - 0.8| = 0.8 > 0.707$, triggering a rejection. Standard variance systematically invalidates the prior.
\item {When $k = 3$:} $\text{Gap}(3) \approx 0.667$, and the testing radius is approximately $0.577$. The discrete gap remains larger than the tolerance radius, perpetuating an unstable testing state.
\item {When} the base rollout group is increased to $k=4$, $\text{Gap}(4) = 0.5$, and the testing radius is $1 / \sqrt{4} = 0.5$. The discrete gap identically matches the testing tolerance radius.
\end{itemize}

Continuing with the prior $V = 0.8$:
\begin{itemize}[noitemsep,topsep=0pt,leftmargin=*]
\item If all 4 rollouts are correct (mean $1$): the error is $|1 - 0.8| = 0.2 \leqslant 0.5$, accepting $V$.
\item If 3 rollouts are correct and 1 is incorrect (mean $0.5$): the error is $|0.5 - 0.8| = 0.3 \leqslant 0.5$, which still accepts $V$.
\end{itemize}

This critical threshold establishes a foundational statistical buffer zone. A single incorrect response generated by the policy model due to inherent variance is appropriately attributed to theoretical random noise, sustaining the validity of the prior and preventing the allocation of redundant compute.
For any $k \geqslant 4$, the system strictly satisfies the necessary condition where the tolerance radius comprehensively covers the discrete gap:
\begin{equation}
\frac{1}{\sqrt{k}} \geqslant \frac{2}{k} \implies k \geqslant 4
\end{equation}

Within this regime, the testing tolerance radius is sufficient to absorb at least one complete discrete jump, providing the hypothesis test with rigorous statistical validity.

Consequently, within the dynamic budget allocation framework integrating the value model prior and empirical hypothesis testing, the theoretical and engineering minimum initial rollout size is $k_{\text{min}} = 4$. Allocating an initial budget of less than 4 predictably induces a high false rejection rate due to the structural failure of the tolerance radius.
\end{proof}


\end{document}